\newcommand*\rot{\rotatebox{90}}
\tikzset{external/mode=graphics if exists}
\pgfplotsset{scaled x ticks=false}
\title{Online and Stable Learning of Analysis Operators}
\author{Michael Sandbichler \& Karin Schnass
\thanks{Both authors are with the Department of Mathematics, University of Innsbruck, Technikerstra\ss e 13, 6020 Innsbruck, Austria. Email: michael.sandbichler@uibk.ac.at, karin.schnass@uibk.ac.at}
}
\newcommand{\cosupp}{\mathrm{cosupp}}
\newcommand\ip[2]{\langle #1, #2\rangle}
\newcommand\eps{\varepsilon}
\newcommand\dico{\Phi}
\newcommand\atom{\phi}
\newcommand\bGamma{\bar\Gamma}
\newcommand\bgamma{\bar\gamma}
\newcommand\pdico{\Psi}
\newcommand\patom{\psi}
\newcommand\tr{\operatorname{tr}}
\newcommand{\R}{{\mathbb{R}}}
\newcommand{\E}{{\mathbb{E}}}
\renewcommand{\S}{\mathbb{S}}
\DeclareMathOperator*{\argmin}{\text{arg\,min}}
\newcommand{\supp}{\operatorname{supp}}
\newcommand{\colspan}{\operatorname{colspan}}
\newcommand{\rowspan}{\operatorname{rowspan}}
\newcommand{\id}{\mathds{1}}
\newcommand{\Ocal}{\mathcal{O}}
\newcommand{\Ucal}{\mathcal{U}}
\newcommand{\Acal}{\mathcal{A}}
\newcommand{\Xcal}{\mathcal{X}}
\newcommand{\Ncal}{\mathcal{N}}
\theoremstyle{plain}
\newtheorem{theorem}{Theorem}[section]
\theoremstyle{remark}
\begin{document}

\maketitle

\begin{abstract}
In this paper four iterative algorithms for learning analysis operators are presented. They are built upon the same optimisation principle underlying both Analysis K-SVD and Analysis SimCO. 
The Forward and Sequential Analysis Operator Learning (FAOL and SAOL) algorithms are based on projected gradient descent with optimally chosen step size. The Implicit AOL (IAOL) algorithm is inspired by the implicit Euler scheme for solving ordinary differential equations and does not require to choose a step size. The fourth algorithm, Singular Value AOL (SVAOL), uses a similar strategy as Analysis K-SVD while avoiding its high computational cost. All algorithms are proven to decrease or preserve the target function in each step and a characterisation of their stationary points is provided. Further they are tested on synthetic and image data, compared to Analysis SimCO and found to give better recovery rates and faster decay of the objective function respectively. In a final denoising experiment the presented algorithms are again shown to perform similar to or better than the state-of-the-art algorithm ASimCO. 
\end{abstract}

%
\section{Introduction}\label{sec:intro}

Many tasks in high dimensional signal processing, such as denoising or reconstruction from incomplete information, can be efficiently solved if the data at hand is known to have intrinsic low dimension. One popular model with intrinsic low dimension is the union of subspaces model, where every signal is assumed to lie in one of the low dimensional linear subspaces. However, as the number of subspaces increases, the model becomes more and more cumbersome to use unless the subspaces can be parametrised. Two examples of large unions of parametrised subspaces, that have been successfully employed, are sparsity in a dictionary and cosparsity in an analysis operator. In the sparse model the subspaces correspond to the linear span of just a few normalised columns, also known as atoms, from a $d\times K$ dictionary matrix, $\dico=(\atom_1\ldots \atom_K)$ with $\|\atom_k\|_2=1$, meaning, any data point $y$ can be approximately represented as superposition of $S\ll d$ dictionary elements. If we denote the restriction of the dictionary to the atoms/columns indexed by $I$ as $\dico_I$, we have
\begin{align*}
y \in \bigcup_{|I|\leq S} \colspan \dico_I,  \quad \mbox{or} \quad y \approx \dico x, \quad \mbox{with } x  \mbox{ sparse}.
\end{align*}
In the cosparse model the subspaces correspond to the orthogonal complement of the span of some normalised rows, also known as analysers, from a $K\times d$ analysis operator $\Omega=(\omega_1^\star \ldots \omega_K^\star)^\star$ with $\|\omega_k\|_2=1$. This means that any data point $y$ is orthogonal to $\ell$ analysers or in other words that the vector $\Omega y$ has $\ell$ zero entries and is sparse. If we denote the restriction of the analysis operator to the analysers/rows indexed by $J$ as $\Omega_J$, we have
\begin{align*}
y \in \bigcup_{|J|\geq \ell} (\rowspan \Omega_J)^\perp,  \quad \mbox{or} \quad \Omega y \approx z, \quad \mbox{with } z  \mbox{ sparse}.
\end{align*}
Note that in this model it is not required that the signals lie in the span of $\Omega^\star$, in particular $\Omega^\star \Omega$ need not be invertible. Before being able to exploit these models for a given data class, it is necessary to identify the parametrising dictionary or analysis operator. This can be done either via a theoretical analysis or a learning approach. While dictionary learning is by now an established field, see \cite{rubrel10} for an introductory survey, results in analysis operator learning are still countable, \cite{yanagrda11, rabr13,nadaelgr13, yanagrda13,hakldi13,rupeel13,dowada14,ekba14,ginaelgr14,sewogrkl16,dowadaplha16}. \\
Most algorithms are based on the minimisation of a target function together with various constraints on the analysis operator. In~\cite{yanagrda11,yanagrda13}, an $\ell_1$-norm target function together with the assumption that the operator is a unit norm tight frame was used. 
In transform learning~\cite{rabr13}, in addition to the sparsity of $\Omega Y$, a regularisation term amounting to the negative log-determinant of $\Omega$ is introduced to enforce full rank of the operator. Note that due to the nature of this penalty term overcomplete operators cannot be considered.
The geometric analysis operator learning framework~\cite{hakldi13} uses operators with unit norm rows, full rank and furthermore imposes the restriction that none of the rows are linearly dependent. The last assumption is an additional restriction only if the considered operators are overcomplete. These assumptions admit a manifold structure and a Riemannian gradient descent algorithm is used in order to find the operator.
Analysis K-SVD~\cite{rupeel13} uses an analysis sparse coding step to find the cosupports for the given data and then computes the singular vector corresponding to the smallest singular value of a matrix computed from the data.
Finally, in~\cite{dowada14,dowadaplha16} a projected gradient descent based algorithm with line search, called Analysis SimCO, is presented. There, the only restriction on the analysis operator is that its rows are normalised and the target function enforces sparsity of $\Omega Y$.\\
{\bf Contribution:} In this work we will contribute to the development of the field by developing four algorithms for learning analysis operators, which  improve over state of the art algorithms such as Analysis K-SVD, \cite{rupeel13} and Analysis SimCo, \cite{dowada14, dowadaplha16}, in terms of convergence speed, memory complexity and performance. \\
{\bf Outline:} The paper is organised as follows. After introducing the necessary notation, in the next section we will remotivate the optimisation principle that is the starting point of A-KSVD and ASimCO and shortly discuss the advantages and disadvantages of the two algorithms. We then take a gradient descent approach similar to ASimCO, replacing the line search with an optimal choice for the step size, resulting in the Forward Analysis Operator Learning algorithm (FAOL). In order to obtain an online algorithm, which processes the training data sequentially, we devise an estimation procedure for the quantities involved in the step size calculation leading to the Sequential Analysis Operator Learning algorithm (SAOL) and test the presented algorithms both on synthetic and image data. Inspired by the implicit Euler scheme for solving ordinary differential equations and the analysis of some special solutions of these equations, in Section~\ref{sec:backward}, we gradually invest in the memory requirements and computational complexity per iteration of our schemes in return for avoiding the stepsize altogether and overall faster convergence, leading to Implicit (IAOL) and Singular Vector Analysis Operator Learning (SVAOL). After testing the new algorithms on synthetic and image data demonstrating the improved recovery rates and convergence speed with respect to ASimCO, in Section~\ref{sec:denoise} we apply them to image denoising again in comparison to ASimCO. Finally, in the last section we provide a short discussion of our results and point out future directions of research.\\
{\bf Notation:} Before hitting the slopes, we summarise the notational conventions used throughout this paper. The operators $\Omega$ and $\Gamma$ will always denote matrices in $\R^{K\times d}$ and for a matrix $A$ we denote its transpose by $A^\star$. More specifically, we will mostly consider matrices in the manifold $\Acal := \{\Gamma\in\R^{K\times d}\colon \forall k\in[K]\colon \|\gamma_k\|_2 = 1\}$, where $\gamma_k$ denotes the $k$-th row of the matrix $\Gamma$. By $[n]$, we denote the set $\{1,2,\ldots, n\}$ and we adopt the standard notation $|M|$ for the cardinality of a set $M$. By $\Gamma_J$ with $J\subset [K]$ we denote the restriction of $\Gamma$ to the rows indexed by $J$.\\
A vector $y\in \R^d$ is called $\ell$-cosparse with respect to $\Omega$, if there is an index set $\Lambda\subset [K]$ with $|\Lambda|=\ell$, such that $\Omega_\Lambda y = 0$. The support of a vector $x\in\R^K$ is defined as $\supp(x) = \{k\in[K]\colon x_k\neq0\}$ and the cosupport accordingly as $\cosupp(x) = \{k\in[K]\colon x_k=0\}$. Note that by definition we have $\supp(x) \cup \cosupp(x) =[K]$.
For the runtime complexity $R(n)$, we adopt standard Landau notation, i.e. $R(n)=\Ocal(f(n))$ means, there is a constant $C>0$, such that for large $n$, the runtime $R(n)$ satisfies $R(n)\leq Cf(n)$. \\
Finally, the Frobenius norm of a matrix $A$ is defined by $\Vert A\Vert_F^2:= \tr(A^\star A)$.
\section{Two explicit analysis operator learning algorithms - FAOL and SAOL}\label{sec:cheap}
Since optimisation principles have already successfully led to online algorithms for dictionary learning, \cite{sc14b, sc15}, we will start our quest for an online algorithm by motivating a suitable optimisation principle for analysis operator learning. Suppose, we are given signals $y_n\in \R^d$ that are perfectly cosparse in an operator $\Omega$, i.e.
$\Omega y_n$ has $\ell$ zero entries or equivalently $\Omega y_n - x_n = 0$ for some $x_n$ which has $K-\ell$ non-zero entries. 
If we collect the signals $y_n$ as columns in the matrix $Y=(y_1 \ldots y_N)$, then by construction we have $\Omega Y - X=0$ for some $X \in \Xcal_\ell$ with $\Xcal_\ell:= \{(x_1,x_2,\ldots,x_N)\in \R^{K\times N} \colon |\supp(x_n)| = K-\ell\}$. In the more realistic scenario, where the signals are not perfectly cosparse, we should still have $\Omega Y - X\approx 0$, which naturally leads to the following minimisation program to recover~$\Omega$,
\vspace{-.05cm}\begin{align}\label{eq:AOLcheap2}
\argmin_{\Gamma \in \Acal, X\in \Xcal_\ell} \| \Gamma Y - X\|_F^2.
\end{align}
Apart from additional side constraints on $\Gamma$, such as incoherence, the optimisation program above has already been used successfully as starting point for the development of two analysis operator learning algorithms, Analysis K-SVD~\cite{rupeel13} and Analysis SimCO~\cite{dowada14, dowadaplha16}. AKSVD is an alternating minimisation algorithm, which alternates between finding the best $X \in \Xcal_\ell$ for the current $\Gamma$ and updating $\Gamma$ based on the current $X$. The cosparse approximation scheme used there is quite cumbersome and costly, which means that the algorithm soon becomes intractable as $d$ increases.
ASimCO is a (gradient) descent algorithm with line search. It produces results similar to AKSVD and has the advantage that it does so with a fraction of the computational cost. Still, at closer inspection we see that the algorithm has some problematic aspects. 
The line search cannot be realised resource efficiently, since in each step several evaluations of the target function are necessary, which take up a lot of computation time. Moreover, for each of these function evaluations we must either reuse the training data, thus incurring high storage costs, or use a new batch of data, thus needing a huge amount of training samples. 
Still, if we consider the speedup of ASimCO with respect to AKSVD we see that gradient descent is a promising approach if we can avoid the line search and its associated problems. \\
To see that a gradient descent based algorithm for our problem can also be sequential, let us rewrite our target function, $g_N(\Gamma) = \min_{X\in \Xcal_\ell} \| \Gamma Y - X\|_F^2$.
Abbreviating $\Lambda_n=\supp(x_n)$ and $\Lambda^c_n=\cosupp(x_n)$, we have
\begin{align*}
g_N (\Gamma) &=\sum_{n=1}^N \min_{x_n:|\Lambda_n|=K-\ell}\Vert \Gamma y_n - x_n\Vert_2^2 =\\
& =  \sum_{n=1}^N \min_{x_n: |\Lambda_n|=K-\ell}(\Vert \Gamma_{\Lambda^c_n} y_n\Vert_2^2 + \underbrace{\Vert\Gamma_{\Lambda_n} y_n- x_n\Vert_2^2}_{=0})\\
&=\sum_{n=1}^N \min_{|J| = \ell} \Vert \Gamma_J y_n\Vert_2^2=: f_N(\Gamma).
\end{align*}
Since the gradient of a sum of functions is the sum of the gradients of these functions, from $f_N$ we see that the gradient of our objective function can be calculated in an online fashion.\\ 
Before going into more details about how to avoid a line search and stay sequential, let us lose a few words about the uniqueness of the minima of our objective function.\\
If the signals are perfectly cosparse in $\Omega$, clearly there is a global minimum of $f_N$ at $\Omega$. However, one can easily see that all permutations and sign flips of rows of $\Omega$ are also minimisers of $f_N$. We call these the \emph{trivial ambiguities}. The more interesting question is whether there are other global or local minima? \\
This question is best answered with an example. Assume that all our training signals are (perfectly) $\ell$-cosparse in $\Omega$ but lie in a subspace of $\R^d$. In this case we can construct a continuum of operators $\Gamma$, which also satisfy $f_N(\Gamma)=0$ by choosing a vector $v$ with $\|v\|_2=1$ in the orthogonal complement of this subspace, and by setting $\gamma_k = a_k \omega_k + b_k v$ for some $a^2_k + b^2_k=1$. 
This example indicates that isotropy in the data is important for our problem to be well posed. On the other hand, in case the data has such a low dimensional structure, which can be found via a singular value decomposition of $Y^\star Y$, it is easy to transform the ill posed problem into a well posed one. Armed with the non-zero singular vectors, we just have to project our data onto the lower dimensional space spanned by these vectors and learn the analysis operator within this lower dimensional space. In the following, we assume for simplicity that any such preprocessing has already been done and that the data isotropically occupies the full ambient space $\R^d$ or equivalently that $Y^\star Y$ is well conditioned.

\subsection{Minimising $f_N$}

As mentioned above in order to get an online algorithm we want to use a gradient descent approach but avoid the line search. Our strategy will be to use projected stochastic gradient-type descent with carefully chosen stepsize. Given the current estimate of the analysis operator $\Gamma$, one step of (standard) gradient descent takes the form 
\[\bGamma= \Gamma- \alpha \nabla f_N\left(\Gamma \right).\]
Let us calculate the gradient $\nabla f_N\left(\Gamma \right)$ wherever it exists.
Denote by $J_n$ the set\footnote{The careful reader will observe that the set $J_n$ might not be unique for every $y_n$ and $\Gamma$. If for a given $\Gamma$ at least one $J_n$ is not uniquely determined and $\min_{|J| = \ell} \Vert \Gamma_J y_n \Vert_2^2>0$, then the target function is not differentiable in $\Gamma$. For simplicity we will continue the presentation as if the $J_n$ where uniquely determined, keeping in mind that the derived descent direction only coincides with the gradient where it exists.} for which $\Vert \Gamma_{J_n} y_n\Vert_2^2 = \min_{|J| = \ell} \Vert \Gamma_J y_n \Vert_2^2$, then the derivative of $f_N$ with respect to a row $\gamma_k$ of $\Gamma$ is

\ifthenelse{\boolean{onecol}}{\begin{align}\label{eq:gradient}
\frac{\partial f_N}{\partial \gamma_k} (\Gamma) = \sum_{n=1}^N \sum_{j\in J_n}\frac{\partial}{\partial \gamma_k} \ip{\gamma_j}{y_n}^2 = \sum_{n=1}^N \sum_{j\in J_n}2 \ip{\gamma_j}{y_n}y_n^\star \delta_{kj} = \sum_{n\colon k\in J_n} 2 \ip{\gamma_k}{y_n}y_n^\star=:2 g_k.
\end{align}
}{	
\begin{align}\label{eq:gradient}
\frac{\partial f_N}{\partial \gamma_k} (\Gamma) &= \sum_{n=1}^N \sum_{j\in J_n}\frac{\partial}{\partial \gamma_k} \ip{\gamma_j}{y_n}^2 \notag\\
&= \sum_{n\colon k\in J_n} 2 \ip{\gamma_k}{y_n}y_n^\star=:2 g_k. 
\end{align}
}
Note that as expected the vectors $g_k$ can be calculated online, that is given a continuous stream of data $y_n$, we compute $J_n$, update all $g_k$ for $k\in J_n$, and forget the existence of $y_n$. After processing all signals, we set
\begin{align}
\bgamma_k = \left(\gamma_k - \alpha_k g_k\right)\beta_k.
\end{align}
where $\beta_k = \Vert \gamma_k - \alpha_k g_k\Vert_2^{-1}$ is a factor ensuring normalisation of $\bgamma_k$. This normalisation corresponds to a projection onto the manifold $\Acal$ and is necessary, since a standard descent step will most likely take us out of the manifold.
If we compare to dictionary learning, e.g. \cite{sc14b}, it is interesting to observe that we cannot simply choose $\alpha_k$ by solving the linearised optimisation problem with side constraints using Lagrange multipliers, since this would lead to a zero-update $\bgamma_k =0$. \\
In order to find the correct descent parameter, note that the current value of the target function is given by 
\[f_N(\Gamma) = \sum_{n=1}^N \sum_{k\in J_n} |\ip{\gamma_k}{y_n}|^2 = \sum_{k=1}^K \sum_{n\colon k\in J_n} |\ip{\gamma_k}{y_n}|^2.\]
Defining $A_k:= \sum_{n\colon k\in J_n} y_n y_n^\star$, we see that 
$f_N(\Gamma) = \sum_{k=1}^K \gamma_k A_k \gamma_k^\star$ and we can optimally decrease the objective function by choosing $\alpha_k$, such that it minimises $\bgamma_k A_k \bgamma_k^\star$. Note also that with this definition, the descent directions $g_k$ defined in Equation~\eqref{eq:gradient} are given by $g_k = \gamma_k A_k$. \\
First assume that $g_k \neq 0$, or more generally $g_k \neq \lambda_k \gamma_k$. In case $g_k = 0$ the part of the objective function associated to $\gamma_k$ is already zero and cannot be further reduced, while in case $g_k = \lambda_k \gamma_k$ any admissible stepsize not leading to the zero vector preserves the current analyser, that is $\bar \gamma_k = \gamma_k$. To optimally decrease the target function, we need to solve
\begin{equation}\label{eq:alphaCondition}
\alpha_k = \argmin_{\alpha} \frac{(\gamma_k -\alpha g_k)A_k(\gamma_k - \alpha g_k)^\star}{\|(\gamma_k -\alpha g_k)\|^2}.
\end{equation}
Defining $a_k =\gamma_k A_k\gamma_k^\star$, $b_k = \gamma_k A_k^2\gamma_k^\star$ and $c_k = \gamma_k A_k^3\gamma_k^\star$ a short computation given in Appendix~\ref{sec:CompAlpha} shows that whenever $b_k^2 \neq a_kc_k$ the optimal stepsize has the form,
\begin{align*}
\alpha_k = \frac{a_kb_k-c_k+\sqrt{(c_k-a_kb_k)^2 - 4(b_k^2-a_kc_k)(a_k^2-b_k)}}{2(b_k^2-a_kc_k)}.
\end{align*}
If $b_k^2 = a_kc_k$ and $b_k\neq 0$, the optimal stepsize is $\alpha_k = \frac{a_k}{b_k}$.
Finally, if $b_k =\|A_k\gamma_k^\star\|_2^2= 0$ it follows that $A_k\gamma_k^\star = 0$ and therefore also $a_k = c_k = 0$. In this case we set $\alpha_k = 0$, as $\gamma_k A_k\gamma_k^\star$ is already minimal.\\
We summarise the first version of our derived algorithm, called Forward Analysis Operator Learning (FAOL) in Table~\ref{tab:FAOL}. As input parameters, it takes the current estimate of the analysis operator $\Gamma \in \R^{K\times d}$, the cosparsity parameter $\ell$ and $N$ training signals $Y=(y_1,y_2,\ldots, y_N)$.
\begin{table}
	\centering
	\noindent\fbox{%
		\parbox{0.465\textwidth}{%
			{\bf FAOL($\Gamma,\ell,Y$) - (one iteration)} \\
			
			\begin{itemize}
			\item For all $n\in[N]$:
				\begin{itemize}
					\item Find $J_n = \argmin_{|J| = \ell} \Vert \Gamma_J y_n\Vert_2^2$.
					\item For all $k\in[K]$ update $A_k = A_k+y_n y_n^\star$ if $k\in J_n$.
				\end{itemize}
			\item For all $k\in[K]$:
				\begin{itemize}
					\item Set $a = \gamma_k A_k \gamma_k^\star$, $b = \gamma_k A_k^2 \gamma_k^\star$ and $c = \gamma_k A_k^3 \gamma_k^\star$.
					\item If $b^2-ac\neq 0$, set $\alpha_k:= \frac{ab-c+\sqrt{(c-ab)^2 - 4(b^2-ac)(a^2-b)}}{2(b^2-ac)}$.
					\item If $b^2-ac= 0$ and $b\neq 0$, set $\alpha_k:=\tfrac{a}{b}$.
					\item If $b^2-ac= 0$ and $b= 0$, set $\alpha_k := 0$.
					\item Set $\bgamma_k = \gamma_k(\id  - \alpha_k A_k)$.
				\end{itemize}
			\end{itemize}
		Output $\bGamma =(\frac{\bgamma_1}{\Vert \bgamma_1\Vert_2},\ldots,\frac{\bgamma_K}{\Vert \bgamma_K\Vert_2})^\star$.
		}
	}
	\caption{The FAOL algorithm}
	\label{tab:FAOL}
\end{table}
As a result of the optimal stepsize choice we can prove the following theorem characterising the behaviour of the FAOL algorithm.

\begin{theorem}\label{thm:FAOLdecrease}
The FAOL algorithm decreases or preserves the value of the target function in each iteration.\\ Preservation rather than decrease of the target function can only occur if all rows $\gamma_k$ of the current iterate $\Gamma$ are eigenvectors of the matrix $A_k(\Gamma)$.
\end{theorem}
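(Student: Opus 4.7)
The plan is to chain three observations into a single inequality $f_N(\bGamma) \leq f_N(\Gamma)$ and then to read off when every link in the chain must be tight.

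First, fix $\Gamma$ and let $J_n$ be the cosupports computed from $\Gamma$ by FAOL, so that with $A_k = \sum_{n\colon k\in J_n} y_n y_n^\star$ one has the exact identity $f_N(\Gamma) = \sum_{k=1}^K \gamma_k A_k \gamma_k^\star$. For the new iterate $\bGamma$, the sets $J_n$ are in general no longer optimal but remain admissible size-$\ell$ cosupports, giving
\begin{align*}
f_N(\bGamma) = \sum_{n=1}^N \min_{|J|=\ell} \|\bGamma_J y_n\|_2^2 \leq \sum_{n=1}^N \sum_{k\in J_n} |\ip{\bgamma_k}{y_n}|^2 = \sum_{k=1}^K \bgamma_k A_k \bgamma_k^\star.
\end{align*}
Second, for every index $k$ the stepsize $\alpha_k$ was chosen to minimise the one-dimensional Rayleigh quotient $R_k(\alpha) = \frac{(\gamma_k - \alpha g_k)A_k(\gamma_k - \alpha g_k)^\star}{\|\gamma_k - \alpha g_k\|_2^2}$, and by construction, after normalisation, $\bgamma_k A_k \bgamma_k^\star = R_k(\alpha_k)$. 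Since $\alpha=0$ is always an admissible value, $R_k(\alpha_k) \leq R_k(0) = \gamma_k A_k \gamma_k^\star$. Summing over $k$ and concatenating with the previous bound yields $f_N(\bGamma)\leq f_N(\Gamma)$.

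For the second claim, suppose $f_N(\bGamma)=f_N(\Gamma)$. Then both inequalities must be tight for every $k$, so in particular $R_k(\alpha_k) = R_k(0)$, meaning that $\alpha = 0$ is itself a minimiser of $R_k$ and hence $R_k'(0) = 0$. A short computation with numerator $a_k - 2\alpha b_k + \alpha^2 c_k$ and denominator $1 - 2\alpha a_k + \alpha^2 b_k$ yields
\begin{align*}
R_k'(0) = 2(a_k^2 - b_k), \qquad \text{with } a_k=\gamma_k A_k\gamma_k^\star, \quad b_k=\|A_k\gamma_k^\star\|_2^2,
\end{align*}
and since $\|\gamma_k\|_2=1$ the condition $a_k^2=b_k$ is precisely the equality case of Cauchy--Schwarz applied to $\gamma_k^\star$ and $A_k\gamma_k^\star$. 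This forces $A_k\gamma_k^\star$ to be a scalar multiple of $\gamma_k^\star$, i.e., $\gamma_k$ is an eigenvector of the symmetric matrix $A_k$. The converse is immediate: if $\gamma_k A_k = \lambda_k \gamma_k$, then $g_k=\lambda_k \gamma_k$, the entire line $\gamma_k - \alpha g_k$ is parallel to $\gamma_k$, and after normalisation $\bgamma_k = \pm \gamma_k$, so the target value is preserved.

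The main obstacle is the careful treatment of degenerate configurations, namely when the denominator of $R_k$ vanishes for some $\alpha$ (so $R_k$ is not smooth everywhere) or when the formulaic step would produce $\bgamma_k = 0$ before normalisation. In both situations one must interpret the update as leaving the row unchanged, so that the row contribution to $f_N$ is trivially preserved, and one must check that these cases only occur precisely when $\gamma_k$ is already an eigenvector of $A_k$, keeping the characterisation of the fixed points intact. The verification that the branch-wise closed-form $\alpha_k$ stated in Table~\ref{tab:FAOL} coincides with the minimiser of $R_k$ is the calculus exercise relegated to Appendix~\ref{sec:CompAlpha}.
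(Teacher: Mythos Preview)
Your proof is correct and, for the first part, essentially identical to the paper's: both chain the inequality coming from the optimal stepsize, $\bgamma_k A_k \bgamma_k^\star \leq \gamma_k A_k \gamma_k^\star$, with the inequality coming from re-optimising the cosupports $J_n$ for $\bGamma$.

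For the second part your route is genuinely different and cleaner. The paper simply writes ``a direct consequence of the derivation of $\alpha_k$ given in Appendix~\ref{sec:CompAlpha}'' and relies on the case-by-case calculus there (checking when $F''(\alpha_+)=0$, handling $b^2=ac$, etc.). You instead observe that equality forces $\alpha=0$ to be a minimiser of the smooth function $R_k$, hence $R_k'(0)=2(a_k^2-b_k)=0$, and then invoke the equality case of Cauchy--Schwarz between $\gamma_k^\star$ and $A_k\gamma_k^\star$ to conclude that $\gamma_k$ is an eigenvector. This bypasses all of the appendix analysis for the characterisation of stationarity; the appendix is still needed, as you note, only to certify that the branch-wise formula in Table~\ref{tab:FAOL} really is the minimiser of $R_k$. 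Your treatment of the degenerate cases ($g_k=0$ or $g_k=\lambda_k\gamma_k$, where the line collapses) is also explicit, whereas the paper handles these in the text preceding the theorem rather than inside the proof.
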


\begin{proof}
To prove the first part of the theorem observe that
\ifthenelse{\boolean{onecol}}{
\[f(\Gamma) = \sum_{k=1}^K \sum_{n\colon k\in J_n} |\ip{\gamma_k}{y_n}|^2\geq \sum_{k=1}^K \sum_{n\colon k\in J_n} |\ip{\bgamma_k}{y_n}|^2\geq \sum_{k=1}^K \sum_{n\colon k\in \bar J_n} |\ip{\bgamma_k}{y_n}|^2 = f(\bGamma).\]}
{
\begin{align*}
f_N(\Gamma) & = \sum_{k=1}^K \sum_{n\colon k\in J_n} |\ip{\gamma_k}{y_n}|^2 \geq \sum_{k=1}^K \sum_{n\colon k\in J_n} |\ip{\bgamma_k}{y_n}|^2\\
&\geq \sum_{k=1}^K \sum_{n\colon k\in \bar J_n} |\ip{\bgamma_k}{y_n}|^2 = f_N(\bGamma),
\end{align*}
}
where $\bar J_n$ denotes the minimising set for $y_n$ based on $\bar \Gamma$.
The first inequality follows from the choice of $\alpha_k$ and the second inequality follows from the definition of the sets $J_n$ and $\bar J_n$. \\
The second part of the theorem is a direct consequence of the derivation of $\alpha_k$ given in Appendix~\ref{sec:CompAlpha}.
\end{proof}
Let us shortly discuss the implications of Theorem~\ref{thm:FAOLdecrease}. It shows that the sequence of values of the target function $v_k = f(\Gamma^{(k)})$ converges. This, however, does not imply convergence of the algorithm as suggested in~\cite{dowadaplha16}, at least not in the sense that the sequence $\Gamma^{(k)}$ converges. Indeed the sequence $\Gamma^{(k)}$ could orbit around the set $\mathcal{L} = \{\Gamma\in\Acal\colon f(\Gamma) = v\}$, where $v = \lim_{k\to\infty} v_k$. 
If this set contains more than one element, there need not exist a limit point of the sequence $\Gamma^{(k)}$. Nevertheless, due to compactness of the manifold $\Acal$, we can always find a subsequence, that converges to an element $\Gamma\in\mathcal{L}$. In order to avoid getting trapped in such orbital trajectories, in numerical experiments we draw a fresh batch of signals $y_1,\ldots, y_N$ in each iteration of the algorithm.
\\
We proceed with an analysis of the runtime complexity of the FAOL algorithm.
The cost of finding the support sets $S_k = \{n\colon k\in J_n\}$ of average size $N\ell/K$ in the FAOL algorithm is $\Ocal(dKN)$ amounting to the multiplication of the current iterate $\Gamma$ with the data matrix $Y$ and subsequent thresholding.
We can now either store the $K$ matrices $A_k$ of size $d\times d$ amounting to a memory complexity of $\Ocal(Kd^2)$ or store the data matrix $Y$ and the optimal cosupports $S_k$ requiring memory on the order of $\Ocal(dN)$ and $\Ocal(\ell N)$, respectively. Setting up all matrices $A_k$   takes $\Ocal(d^2N)$ multiplications and $\Ocal(\ell d^2N)$ additions, if done sequentially, and dominates the cost of calculating $a_k, b_k,c_k$.
Denote by $Y_k$ the submatrix of the data matrix $Y$ with columns indexed by $S_k$. Note that with this convention we have $A_k = Y_k Y_k^\star$. If we store the data matrix $Y$ and the sets $S_k$, we can also compute all necessary quantities via
$g_k = (\gamma_k Y_k) Y_k^\star$, $a = \ip{g_k}{\gamma_k}$, $b = \ip{g_k}{g_k}$ and $c = \ip{g_k Y_k}{g_k Y_k}$ altogether amounting to $\Ocal(\ell dN)$ floating point operations, as in this case only matrix-vector products have to be computed. So while the memory complexity of the first approach might be smaller depending on the amount of training data, the second approach has a reduced computational complexity.\\
If we are now given a continuous stream of high dimensional data, it is not desirable to store either the matrices $A_k$ or the data matrix $Y$, so as a next step we will reformulate the FAOL algorithm in an online fashion. Note, that with the exception of $c$, all quantities in the FAOL algorithm can be computed in an online fashion.
We will solve this issue by estimating $c$ from part of the data stream. First, note that if we exchange the matrix $A_k$ in the FAOL algorithm with the matrix $\tilde A_k :=\frac{1}{|S_k|} \sum_{n\in S_k} y_n y_n^\star$, where $S_k:= \{n\in[N]\colon k\in J_n\}$, we do not alter the algorithm.
The numbers $c_k$ can be computed from the gradients $g_k$ and the matrix $A_k$ via $c_k = g_k A_k g_k^{\star}=\tfrac{1}{|S_k|}\sum_{n\in S_k} |\ip{g_k}{y_n}|^2$. If we want to estimate $c_k$, we need both, a good estimate of the gradients $g_k$, and a good estimate of $A_k g_k^\star$.  
We do this by splitting the datastream into two parts. The first part of the datastream is used to get a good estimate of the normalised gradient $g_k$. The second part is used to refine $g_k$ as well as to estimate $ \tfrac{1}{|S_k|}\sum_{n\in S_k} |\ip{g_k}{y_n}|^2$. The parameter $\eps$ specifies the portion of the datastream used to estimate $c_k$ and refine $g_k$. We summarise all our considerations leading to the algorithm, referred to as Sequential Analysis Operator Learning (SAOL), in Table~\ref{tab:SAOL}.\\
\begin{table}
	\centering
	\noindent\fbox{%
		\parbox{0.465\textwidth}{%
			{\bf SAOL($\Gamma,\ell,Y,\eps$) - (one iteration)} \\
			
			Initialize $I_k,C_k,c_k = 0$ and $g_k = 0$ for $k\in [K]$.
			\begin{itemize}
				\item For all $n\in[N]$:
				\begin{itemize}
					\item Find $J_n = \argmin_{|J| = \ell} \Vert \Gamma_J y_n\Vert_2^2$.
					\item For all $k\in J_n$ update $I_k \rightarrow I_k+1$ 
					and \[g_k\rightarrow \frac{I_k-1}{I_k}g_k+\frac{1}{I_k}\ip{\gamma_k}{y_n}y_n^\star.\]
					\item If $n > (1-\eps)N$ and $k\in J_n$  update $C_k  \rightarrow C_k+1$ and 
					\[ c_k \rightarrow \frac{C_k-1}{C_k}c_k+\frac{1}{C_k} |\ip{g_k}{y_n}|^2.\]
				\end{itemize}
				\item For all $k\in [K]$:
				\begin{itemize}
				\item Set $a =\ip{\gamma_k}{g_k}$, $b = \ip{g_k}{g_k}$ and $c=  c_k$, 
				\item Set $\alpha_k = \frac{ab-c+\sqrt{(c-ab)^2 - 4(b^2-ac)(a^2-b)}}{2(b^2-ac)}$.
				\item Set $\bgamma_k = \left(\gamma_k - \alpha_k g_k\right)$.
				
				\end{itemize}
			\item Output $\bGamma =(\frac{\bgamma_1}{\Vert \bgamma_1\Vert_2},\ldots,\frac{\bgamma_K}{\Vert \bgamma_K\Vert_2})^\star$.
			\end{itemize}
		}
 	}
	\caption{The SAOL algorithm}
	\label{tab:SAOL}
\end{table}
Concerning the computation and storage costs, we see that, as for FAOL, the computationally expensive task is determining the sets $J_n$. This has to be done for each of our $N$ sample vectors via determining the $\ell$ smallest entries in the product $\Gamma y_n$. The matrix-vector product takes $(2d-1)K$ operations and searching can be done in one run through the $K$ resulting entries, yielding an overall runtime complexity of $\Ocal(dKN)$. However, compared to FAOL, the sequential version has much lower memory requirements on the order of $\Ocal(dK)$, corresponding to the gradients $g_k$ and the current version of the operator $\Gamma$.
In order to see how the two algorithms perform, we will next conduct some experiments both on synthetic and image data.

\subsection{Experiments on synthetic data}\label{sec:StupidNum}
In the first set of experiments, we use synthetic data generated from a given (target) analysis operator $\Omega$. A data vector $y$ is generated by choosing a vector $z$ from the unit sphere and a random subset $\Lambda$ of $\ell$ analysers. We then project $z$ onto the orthogonal complement of the chosen analysers, contaminate it with Gaussian noise and normalise it, see Table~\ref{tab:SignalModel}. The cosparse signals generated according to this model are very isotropic and thus do not exhibit the pathologies we described in the counterexample at the beginning of the section.\\
\begin{table}
	\centering
	\noindent\fbox{
		\parbox{0.465\textwidth}{
			{\bf Signal model($\Omega,\ell,\rho$)}\\
			Input:
			\begin{itemize}
				\item $\Omega\in\R^{K\times d}$ - target analysis Operator,
				\item $\ell$ - cosparsity level of the signals w.r.t. $\Omega$,
				\item $\rho$ - noise level.
			\end{itemize}
			Generation of the signals is done in the following way: 
			\begin{itemize}
				\item Draw $z\sim \Ncal(0,I_d), r\sim \Ncal(0,\rho^2 I_d)$ and $\Lambda\sim \Ucal( {[K]\choose\ell})$.
				\item Set 
				\begin{align}\label{eq:SignalModel}
				y = 
				\frac{(\id - \Omega_\Lambda^\dagger \Omega_\Lambda) z + r}{\Vert(\id - \Omega_\Lambda^\dagger \Omega_\Lambda) z + r\Vert}.
				\end{align}
			\end{itemize}
			The matrix $(\id -\Omega_\Lambda^\dagger \Omega_\Lambda)$ is a projector onto the space of all cosparse signals with cosupport $\Lambda$, so generating our signals in this way makes sure that they are (up to some noise) cosparse.
		}
	}
	\caption{Signal model}
	\label{tab:SignalModel}
\end{table}
\noindent {\bf Target operator:} As target operator for our experiments with synthetic data, we used a random operator of size $128\times 64$ consisting of rows drawn i.i.d. from the unit sphere $\S^{63}$.\\ 
{\bf Training signals:} Unless specified otherwise, in each iteration of the algorithm, we use $2^{17} = 131072$ signals drawn according to the signal model in Table~\ref{tab:SignalModel} with cosparsity level $\ell= 55$ and noiselevel $\rho = 0$ for noisefree resp. $\rho =0.2/\sqrt{d}$ for noisy data. We also conducted experiments with cosparsity level $\ell = 60$, but the results are virtually indistinguishable from the results for $\ell = 55$, so we chose not to present them here. We refer the interested reader to the AOL toolbox on the homepage of the authors\footnote{All experiments can be reproduced using the AOL Matlab toolbox available at \url{https://www.uibk.ac.at/mathematik/personal/schnass/code/aol.zip}.}, which can be used to reproduce the experiments.\\
{\bf Initialisation \& setup:} We use both a closeby and a random initialisation of the correct size. For the closeby initialisation, we mix the target operator 1:1 with a random operator and normalise the rows, that is, our initialisation operator is given by $\Gamma_0 = D_n (\Omega + R)$, where $R$ is a $K\times d$ matrix with rows drawn uniformly at random from the unit sphere $\S^{d-1}$ and $D_n$ is a diagonal matrix ensuring that the rows of $\Gamma_0$ are normalised. For the random initialisation we simply set $\Gamma_0 = R$. The correct cosparsity level $\ell$ is given to the algorithm and the results have been averaged over $5$ runs with different initialisations.\\
{\bf Recovery threshold:} We use the convention that an analyser $\omega_k$ is recovered if $\max_j|\ip{\omega_k}{\gamma_j}|\geq 0.99$.\\
Our first experiment is designed to determine the proportion of signals $L =\eps N$ that SAOL should use to estimate the values of $c_k$. We make an exploratory run for FAOL and SAOL with several choices of $\eps$, using 16384 noiseless, 60-cosparse signals per iteration and a random initialisation.
\ifthenelse{\boolean{onecol}}{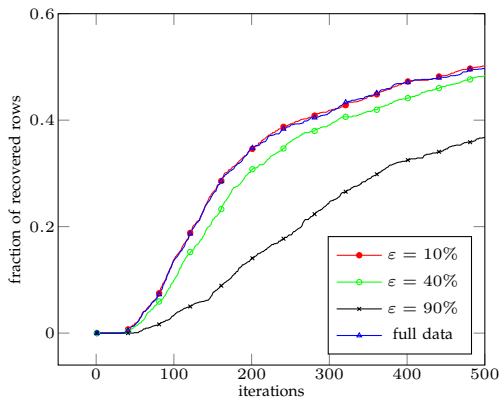
\begin{figure}[h!]
	\centering
	\begin{tikzpicture}
	\begin{axis}[
	xlabel=iterations,
	ylabel= fraction of recovered rows,
	yticklabel style = {font=\tiny,xshift=0.5ex},
	xticklabel style = {font=\tiny,yshift=0.5ex},
	ylabel style = {font=\tiny,yshift =-.65cm},
	xlabel style = {font=\tiny,yshift = .4cm},
	legend style = {font=\tiny},
	ymax=0.6 , 
	xmax = 500,
	legend pos=south east, 
	width = 0.4\columnwidth]	
	\addplot[color=red,mark = *, mark repeat = 40, mark size = 1pt] table[x=t,y=hits10]{NewPics/EpsilonEstimatorSequential.txt};  
	\addplot[color=green,mark = o, mark repeat = 40, mark size = 1pt] table[x=t,y=hits40]{NewPics/EpsilonEstimatorSequential.txt}; 
	\addplot[color=black,mark = x, mark repeat = 40, mark size = 1pt] table[x=t,y=hits90]{NewPics/EpsilonEstimatorSequential.txt}; 
	\addplot[color=blue,mark = triangle, mark repeat = 40, mark size = 1pt] table[x=t,y=hitsAll]{NewPics/EpsilonEstimatorSequential.txt}; 
	\legend{$\varepsilon = 10\%$,$\varepsilon = 40\%$ ,$\varepsilon = 90\%$, full data}
	
	\end{axis}
	\end{tikzpicture}
	
	\caption{Recovery rates of an exploratory run using FAOL and SAOL with different epsilons for the recovery of a random $128\times 64$ operator using 16384 samples in each iteration.}
	\label{fig:EpsChoice}
\end{figure}}
{\begin{figure}[h!]
		\centering
		\begin{tikzpicture}
		\begin{axis}[
		xlabel=iterations,
		ylabel= fraction of recovered rows,
		yticklabel style = {font=\tiny,xshift=0.5ex},
		xticklabel style = {font=\tiny,yshift=0.5ex},
		ylabel style = {font=\tiny,yshift =-.65cm},
		xlabel style = {font=\tiny,yshift = .4cm},
		legend style = {font=\tiny},
		ymax=0.6 , 
		xmax = 800,
		legend pos=south east, 
		width = 0.63\columnwidth]	
		\addplot[color=red,mark = *, mark repeat = 40, mark size = 1pt] table[x=t,y=hits10]{NewPics/EpsilonEstimatorSequential.txt};  
		\addplot[color=green, mark = o, mark repeat = 40, mark size = 1pt] table[x=t,y=hits40]{NewPics/EpsilonEstimatorSequential.txt}; 
		\addplot[color=black,mark = x, mark repeat = 40, mark size = 1pt] table[x=t,y=hits90]{NewPics/EpsilonEstimatorSequential.txt}; 
		\addplot[color=blue, mark = triangle, mark repeat = 40, mark size = 1pt] table[x=t,y=hitsAll]{NewPics/EpsilonEstimatorSequential.txt}; 
		\legend{$\varepsilon = 10\%$,$\varepsilon = 40\%$ ,$\varepsilon = 90\%$, full data}
		
		\end{axis}
		\end{tikzpicture}
		
		\caption{Recovery rates of an exploratory run using FAOL and SAOL with different epsilons for the recovery of a random $128\times 64$ operator using 16384 samples in each iteration.}
		\label{fig:EpsChoice}
\end{figure}}

The recovery rates in Figure~\ref{fig:EpsChoice} indicate that in order for the SAOL algorithm to achieve the best possible performance, $\varepsilon$ should be chosen small, meaning one should first get a good estimate of the gradients $g_k$. This allocation of resources also seems natural since for small $\varepsilon$ a large portion of the data is invested into estimating the $d$-dimensional vectors $g_k$, while only a small portion is used to subsequently estimate the numbers $c_k$. Based on these findings we from now on set $\eps = 10\%$ for the SAOL-algorithm.\\
In the next experiment we compare the recovery rates of FAOL, SAOL and Analysis-SimCO~\cite{dowadaplha16} from random and closeby initialisations in a noiseless setting.

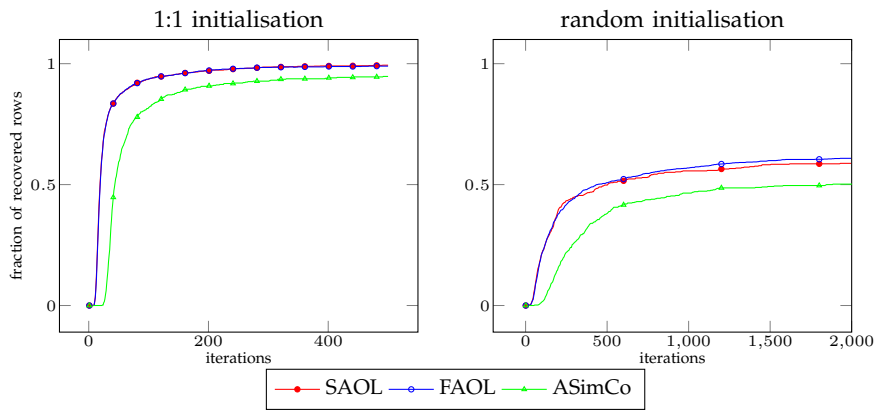
\begin{figure}
	\centering
	\begin{tikzpicture}
	\begin{groupplot}[
	group style={group size=2 by 1, horizontal sep = 1cm}]
	\nextgroupplot[title = {\footnotesize 1:1 initialisation}, title style={yshift=-1ex},legend to name={CommonLegend},legend style={legend columns=4, font = \scriptsize},
	xlabel=iterations,
	ylabel= fraction of recovered rows,
	ylabel style = {font=\tiny},
	yticklabel style = {font=\tiny,xshift=0.5ex},
	xticklabel style = {font=\tiny,yshift=0.5ex},
	ylabel style = {font=\tiny,yshift =-.65cm},
	xlabel style = {font=\tiny,yshift = .4cm},
	ymax=1.1 , 
	width = 0.35\columnwidth]
	
	\addplot[color=red, mark = *, mark size = 1pt, mark repeat = 40] table[x=t,y=l55n0]{NewPics/SAOL_Synthetic_Close.txt}; 
	\addplot[color=blue, mark = o, mark size = 1pt, mark repeat = 40] table[x=t,y=l55n0]{NewPics/FAOL_Synthetic_Close.txt}; 
	\addplot[color = green, mark = triangle, mark size = 1pt, mark repeat = 40] table[x=t, y=l55n0]{NewPics/ASimCO_Synthetic_Noisefree_Closeby.txt};
	
	\addlegendimage{red, mark=*}
	\addlegendentry{SAOL}
	\addlegendentry{FAOL}
	\addlegendentry{ASimCo}
	
	\nextgroupplot[title = {\footnotesize random initialisation},
	title style={yshift=-1ex},
	xlabel=iterations,
	yticklabel style = {font=\tiny,xshift=0.5ex},
	xticklabel style = {font=\tiny,yshift=0.5ex},
	xlabel style = {font=\tiny,yshift = .4cm},
	ymax=1.1 , 
	xmax = 2000,
	width = 0.35\columnwidth
	]
	\addplot[color=red, mark = *,mark size = 1pt, mark repeat = 120] table[x=t,y=l55n0]{NewPics/SAOL_Synthetic_Noisefree.txt};  
	\addplot[color=blue, mark = o,mark size = 1pt, mark repeat = 120] table[x=t,y=l55n0]{NewPics/FAOL_Synthetic_Noisefree.txt};
	\addplot[color = green, mark = triangle, mark size = 1pt, mark repeat = 120] table[x=t, y=l55n0]{NewPics/ASimCO_Synthetic_Noisefree.txt};
	
	\end{groupplot}
	\path (group c1r1.south east) -- node[below=10pt]{\ref{CommonLegend}} (group c2r1.south west);
	\end{tikzpicture}
	\caption{Recovery rates of SAOL, FAOL and ASimCo from signals with various cosparsity levels $\ell$ in a noiseless setting, using closeby (left) and random (right) initialisations for cosparsity level $\ell =55$.}
	\label{fig:StupidNoisefree}
\end{figure}
\begin{figure}
	\centering
	\includegraphics[width=0.3\textwidth, height = 0.6\textwidth]{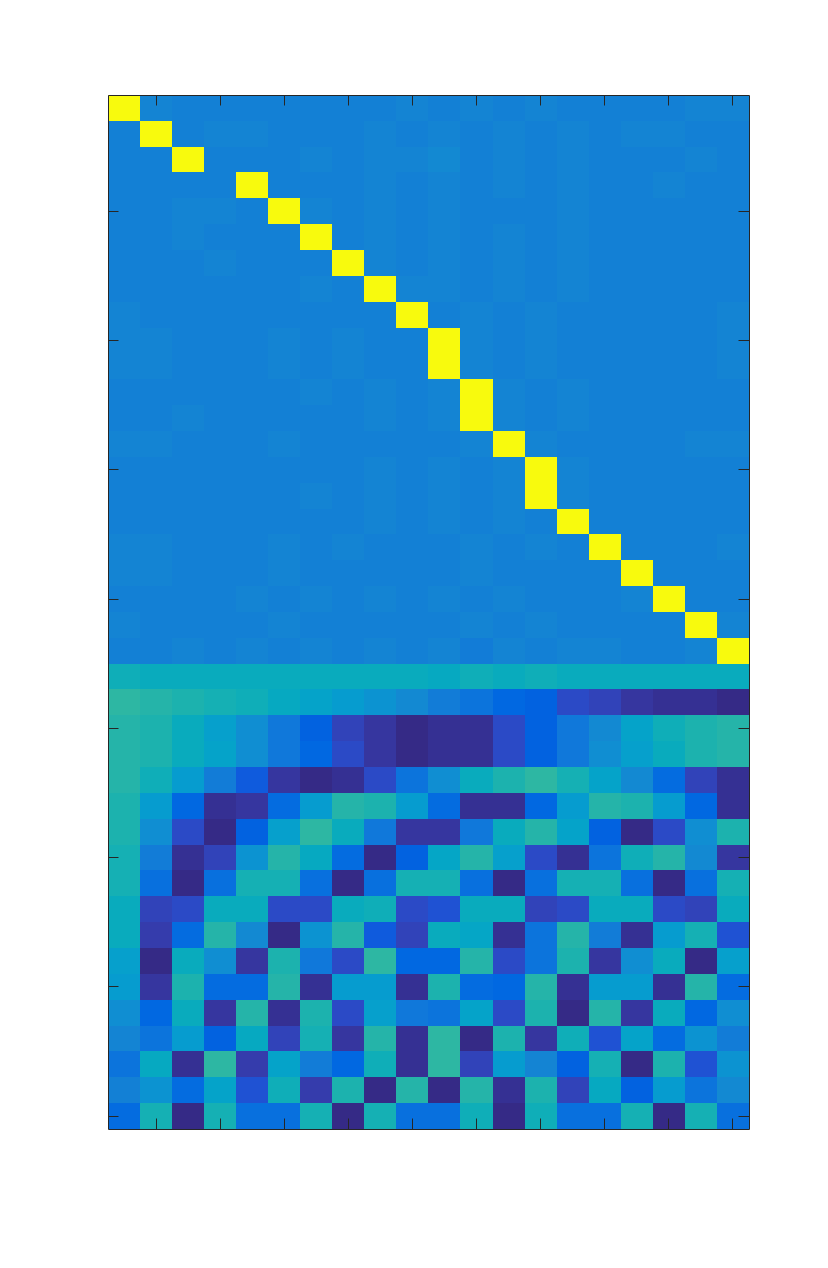}
	\quad
	\includegraphics[width=0.3\textwidth, height = 0.6\textwidth]{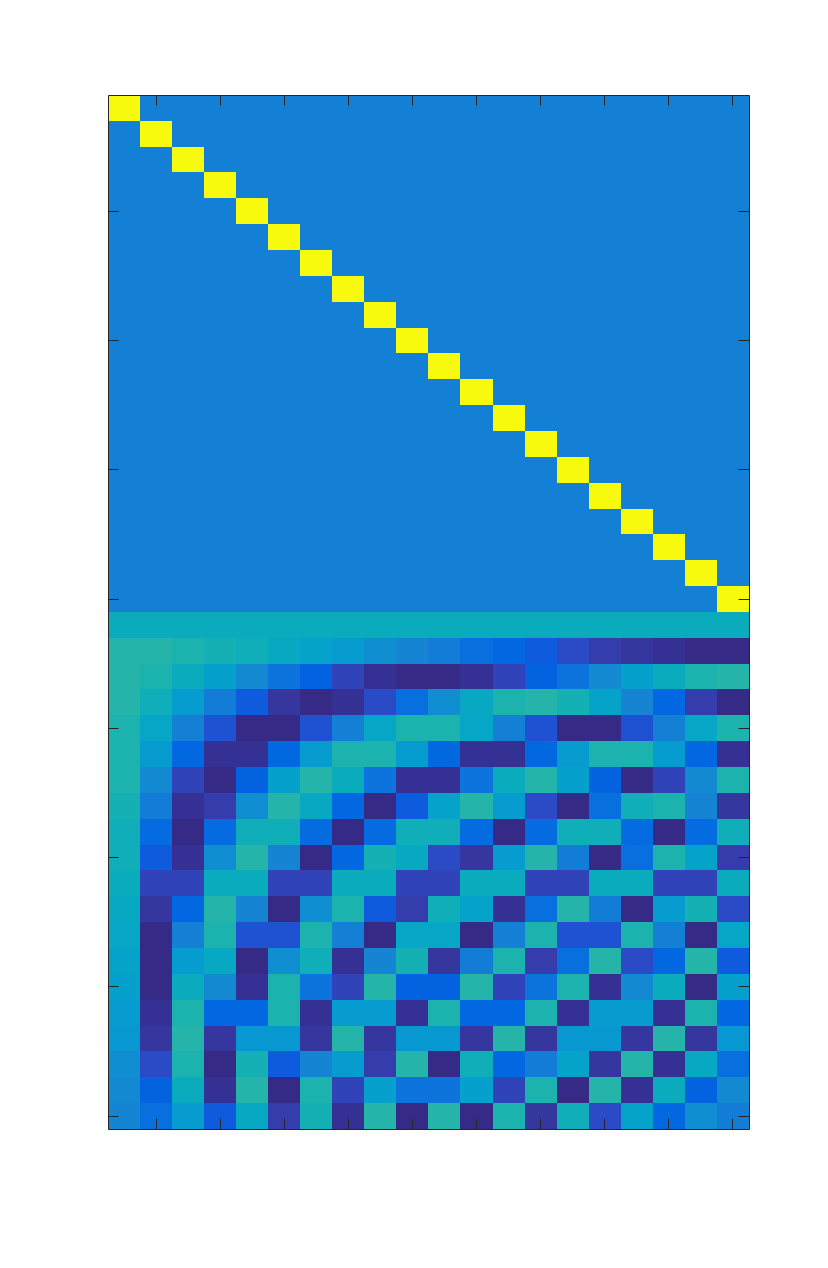}
	\caption{Operator learned with FAOL from a random initialisation (left) vs the original Dirac-DCT operator (right). The rows of the learned operator have been reordered and the signs have been matched with the original operator for easier comparison. For the learning $300$ iterations with $8192$ noiseless 12-cosparse signals, constructed according to the model in Table~\ref{tab:SignalModel} were used.}
	\label{fig:MultipleRecovery}
\end{figure}
The first good news of the results, plotted in Figure~\ref{fig:StupidNoisefree}, is that the sequential algorithm SAOL with estimated stepsize performs as well as the one with explicitly calculated optimal stepsize. We also see
that with a closeby initialisation both algorithms recover the target operator (almost) perfectly for both cosparsity levels, which indicates that locally our algorithms perform as expected.
With a random initialisation the algorithms tend to saturate well below full recovery. This is not surprising, as the nonconvex optimisation we perform depends heavily on the initialisation. In case of the closeby initialisation, we set each row of the starting operator near the desired row of the target operator. In contrast, for the random initialisation it is very likely that two rows of the initialised operator lie close to the same row of the target operator. Our algorithms then tend to find the nearest row of the target operator and thus we get multiple recovery of the same row. As we have prescribed a fixed number of rows, another row must be left out, which leads to the observed stagnation of the recovery rates and means that we are trapped in a local minimum of our target function. Figure~\ref{fig:MultipleRecovery} illustrates this effect for the Dirac-DCT operator in $\R^{40\times 20}$. \\

Since the phenomenon of recovering duplicates is not only as old as analysis operator learning but as old as dictionary learning, \cite{ahelbr06}, there is also a known solution to the problem, which is the replacement of coherent analysers or atoms. 

\subsection{Replacement}\label{sec:ReplacementSimple}
A straightforward way to avoid learning analysis operators with duplicate rows is to check after each iteration, whether two analysers of the current iterate $\Gamma$ are very coherent. Under the assumption that the coherence of the target operator $\mu(\Omega) = \max_{i\neq j\in [K]} |\ip{\omega_i}{\omega_j}|$ is smaller than some threshold $\mu(\Omega)\leq \mu_0$, we know that two rows of $\gamma_i,\gamma_j$ are likely to converge to the same target analyser, whenever we have $|\ip{\gamma_i}{\gamma_j}|> \mu_0$.\\
In this case, we perform the following decorrelation procedure. During the algorithm, we monitor the activation of the individual rows of the operator, that is, if the $k-$th row of the operator is used for a set $J_n$, we increment a counter $v_k$. If now two rows $\gamma_i$ and $\gamma_j$ have overlap larger than $\mu_0$, we compare the numbers $v_i$ and $v_j$ and keep the row with larger counter. Without loss of generality suppose $v_i>v_j$. We then subtract the component in direction of $\gamma_i$ from $\gamma_j$, namely $\tilde \gamma_j = \gamma_j - \ip{\gamma_i}{\gamma_j}\gamma_i$ and renormalise.\\
This decorrelation is different from the one that is performed in~\cite{dowadaplha16}, but has the merit that correct rows that have already been found do not get discarded or perturbed. This is especially useful in the case we consider most likely, where one row already has large overlap with a row of the target operator and another row slowly converges towards the same row. Then our decorrelation procedure simply subtracts the component pointing in this direction.\\
Since, unlike dictionaries, analysis operators can be quite coherent and still perform very well, for real data it is recommendable to be conservative and set the coherence threshold $\mu_0$ rather high.

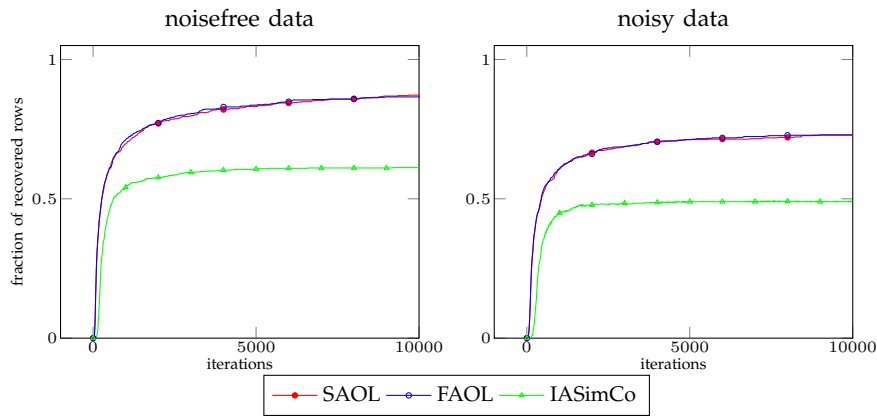
\begin{figure}
	\centering
	\begin{tikzpicture}
	\begin{groupplot}[
	group style={group size=2 by 1, horizontal sep = 1cm}]
	\nextgroupplot[title = {\footnotesize noisefree data}, title style={yshift=-.5ex},legend to name={CommonLegend},legend style={legend columns=4, font = \scriptsize},
	xlabel=iterations,
	ylabel= fraction of recovered rows,
	yticklabel style = {font=\tiny,xshift=0.5ex},
	xticklabel style = {font=\tiny,yshift=0.5ex},
	ylabel style = {font=\tiny,yshift =-.65cm},
	xlabel style = {font=\tiny,yshift = .4cm},
	ymax=1.05 , 
	ymin = 0,
	width = 0.35\columnwidth,
	xtick = {0,5000,10000},
	xmax = 10000,
	xticklabels = {$0$,$5000$,$10000$}]
	\addplot[color=red, mark = *, mark size = 1pt, mark repeat =100] table[x=t,y=l55n0]{NewPics/SAOL_Synthetic_Replace.txt};  
	\addplot[color=blue, mark = o, mark size = 1pt, mark repeat = 100] table[x=t,y=l55n0]{NewPics/FAOL_Synthetic_Replace.txt}; 
	green\addplot[color=green, mark = triangle, mark size = 1pt, mark repeat = 100] table[x=t,y=targSimCo]{NewPics/CompareAllSyntheticNoisefreeReplacement.txt}; 
	\addlegendimage{red, mark=*}
	\addlegendentry{SAOL}
	\addlegendentry{FAOL}
	\addlegendentry{IASimCo}
	
	\nextgroupplot[title = {\footnotesize noisy data},
	title style={yshift=-1ex},
	xlabel=iterations,
	yticklabel style = {font=\tiny,xshift=0.5ex},
	xticklabel style = {font=\tiny,yshift=0.5ex},
	ylabel style = {font=\tiny,yshift =-.65cm},
	xlabel style = {font=\tiny,yshift = .4cm},
	ymax=1.05 ,
	ymin = 0, 
	width = 0.35\columnwidth,
	xtick = {0,5000,10000},
	xmax = 10000,
	xticklabels = {$0$,$5000$,$10000$}
	]
	\addplot[color=red, mark = *, mark size = 1pt, mark repeat =100] table[x=t,y=l55n02]{NewPics/SAOL_Synthetic_Replace.txt}; 
	\addplot[color=blue, mark = o, mark size = 1pt, mark repeat =100] table[x=t,y=l55n02]{NewPics/FAOL_Synthetic_Replace.txt};
	\addplot[color=green, mark = triangle, mark size = 1pt, mark repeat =100] table[x=t,y=targSimCo]{NewPics/CompareAllSyntheticNoisyReplacement.txt};
	
	\end{groupplot}
	\path (group c1r1.south east) -- node[below=10pt]{\ref{CommonLegend}} (group c2r1.south west);
	\end{tikzpicture}
	\caption{Recovery rates of SAOL and FAOL with replacement from signals with cosparsity level $\ell=55$ in a noiseless (left) and a noisy setting (right), using a random initialisation.}	
	\label{fig:ReplacementSimple}
\end{figure}

Figure~\ref{fig:ReplacementSimple} shows the recovery results of our algorithm with the added replacement step for $\mu_0=0.8$, when using a random initialisation and the same settings as described at the beginning of the section. \\
We see that in the noiseless case, after 10000 iterations almost $90\%$ of the signals have been recovered. If we introduce a small amount of noise, however, significantly fewer rows are recovered. To avoid repetition we postpone a thorough comparison of FAOL/SAOL to ASimCo on synthetic data to Section~\ref{sec:comparison} after the introduction of our other two algorithms in Section~\ref{sec:backward}, however we can already observe now that both SAOL and FAOL perform better than  IASimCO. 
Next we take a look at how the optimal stepsize affects learning on image data.
%

\subsection{Experiments on image data}\label{sec:StupidRealData}
To get an indication how our algorithms perform on real data, we will them to learn a quadratic analysis operator on all $8\times 8$ patches of the $256\times 256$ Shepp Logan phantom, cf. Figure~\ref{fig:TrainingImages}. We initialise the analysis operator $\Gamma\in\R^{64\times 64}$ randomly as for the synthetic data and set the cosparsity level $\ell= 57$, the parameter $\eps=10\%$ and the replacement threshold $\mu_0=0.99$. For each iteration we choose 16384 out of the available 62001 patches uniformly at random as training signals. Since we do not have a reference operator for comparison this time, we compare the value of target function after each iteration, as plotted in Figure~\ref{fig:SAOLFAOLRealData}.
\begin{figure*}[!hb]
	\begin{center}
		\includegraphics[width=0.2\textwidth]{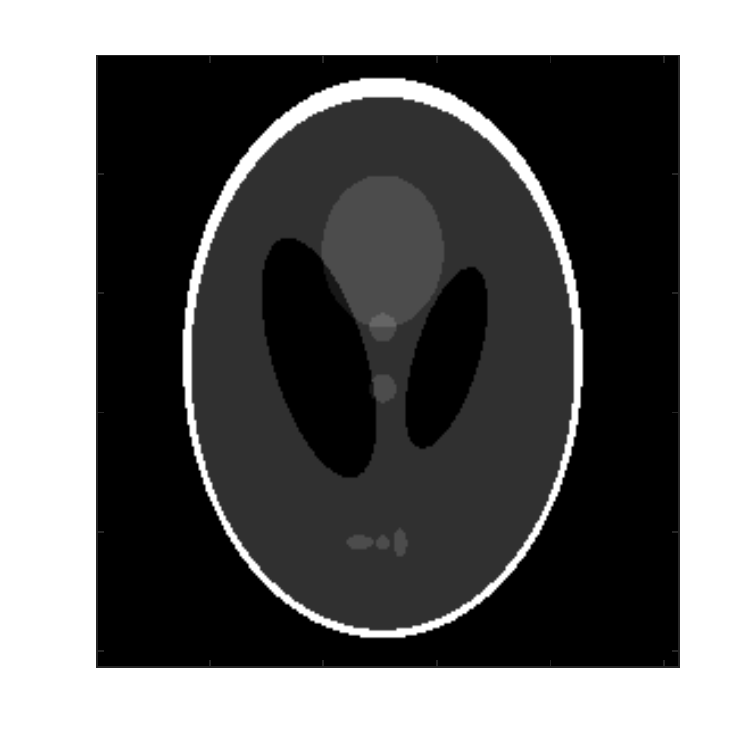}
		\quad
		\begin{tikzpicture}
		\begin{axis}[xlabel=iterations,
		yticklabel style = {font=\tiny,xshift=0.5ex},
		xticklabel style = {font=\tiny,yshift=0.5ex},
		ylabel style = {font=\tiny,yshift =-.5cm},
		xlabel style = {font=\tiny,yshift = .4cm},
		legend style = {font=\small,nodes={scale=0.5, transform shape}},
		legend pos=north east, 
		width = 0.25\textwidth, 
		height = 0.23\textwidth,
		ymode = log, ylabel= value of target function,
		ytick = {0.015625,0.0078125,0.00390625},
		yticklabels = {$2^{-6}$,$2^{-7}$,$2^{-8}$},
		xtick = {0,7500,15000},
		xticklabels = {$0$,$7500$,$15000$},
		xmax = 19950,
		ymin = 0.003]
		\addplot[color=red,mark = *,mark repeat = 200, mark size = 1pt] table[x=t,y=targSAOL]{NewPics/DecayTargetSheppLoganSquareManyIter.txt};
		\addplot[color=blue, mark = o, mark repeat =200, mark size = 1pt] table[x=t,y=targFAOL]{NewPics/DecayTargetSheppLoganSquareManyIter.txt};   	 
		\addplot[color=green, mark = triangle, mark repeat = 100, mark size = 1pt] table[x=t,y=targ]{NewPics/DecayTargetSheppLoganSquareManyIterSimCo.txt};  	 
		\legend{SAOL,FAOL, ASimCo}
		\end{axis}
		\end{tikzpicture}
		\qquad
		\includegraphics[width=0.2\textwidth, height=0.2\textwidth]{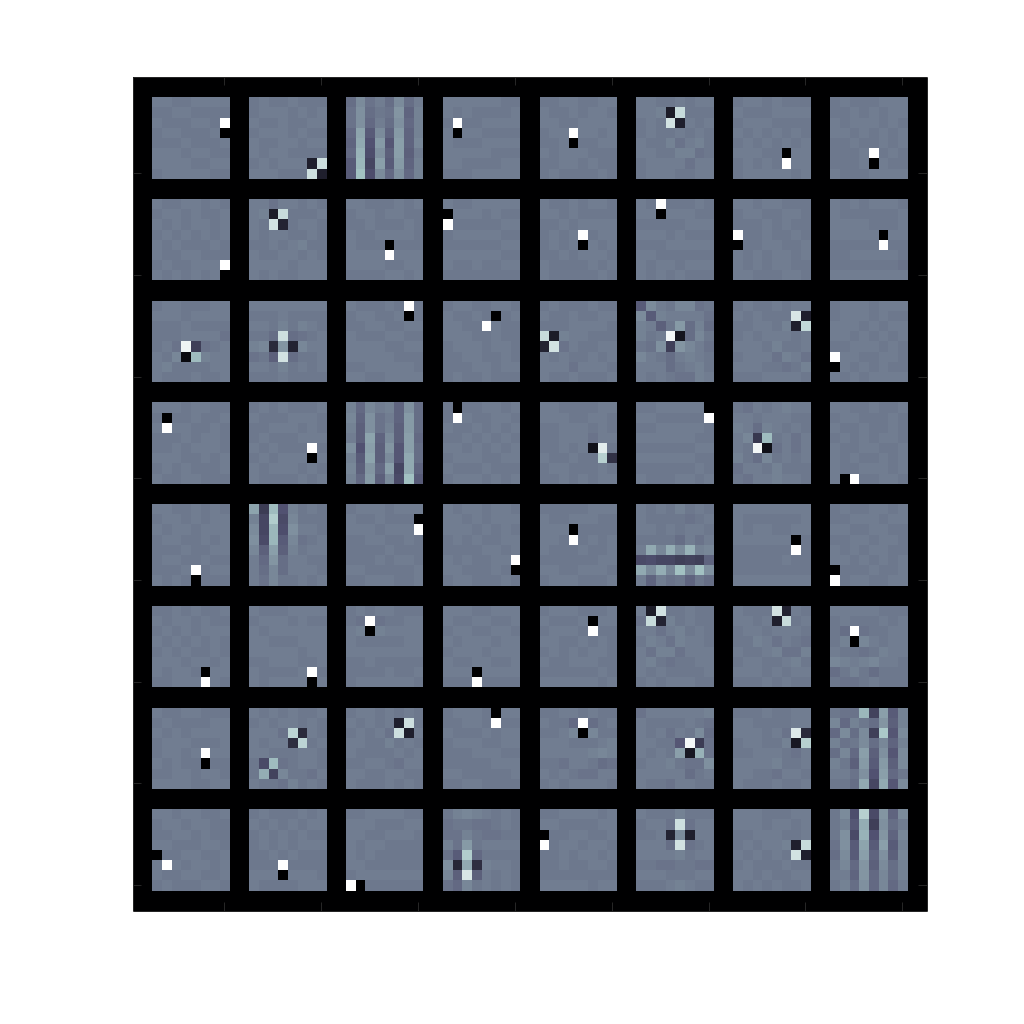}
		\includegraphics[width=0.2\textwidth, height=0.2\textwidth]{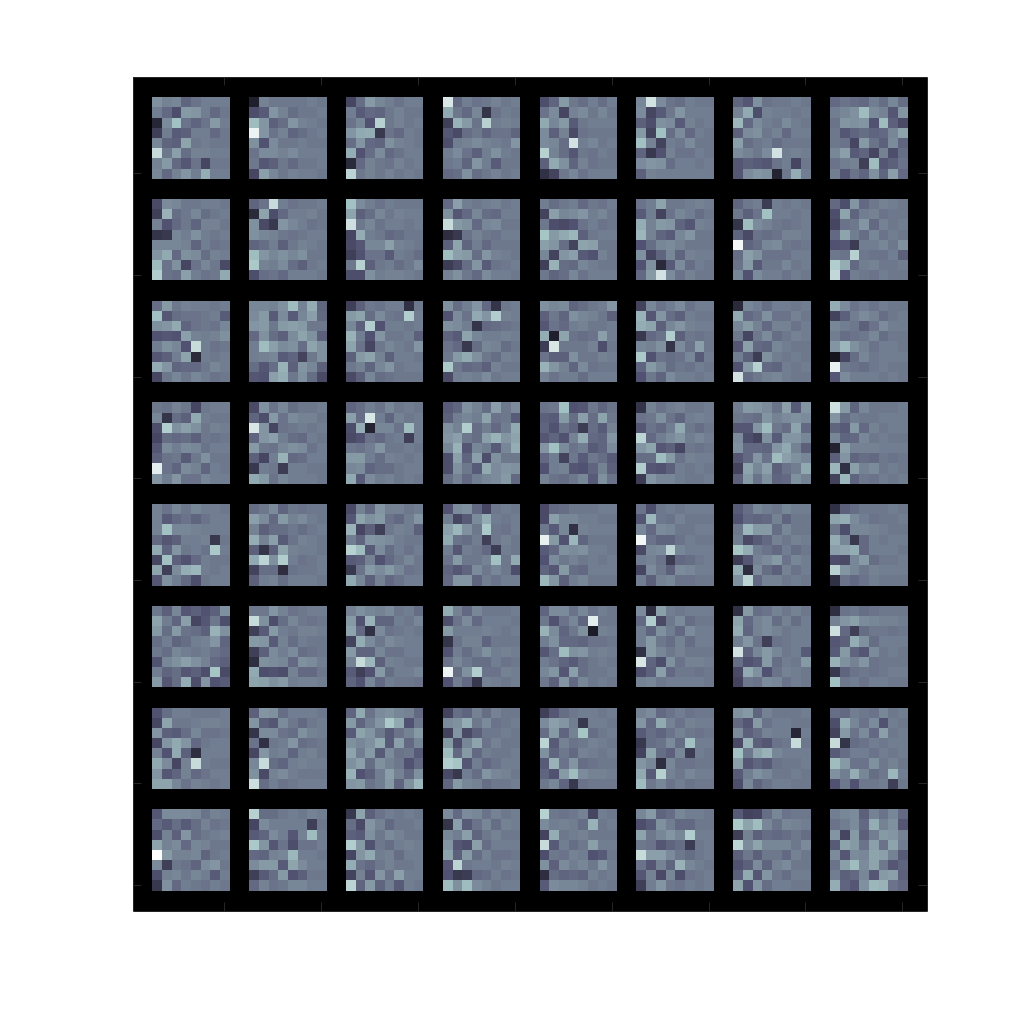}
		\caption{Shepp Logan phantom (1st), value of the target function for both algorithms (2nd), operator obtained by FAOL (3rd) and by ASimCo (4th) after 100000(!) iterations.}
		\label{fig:SAOLFAOLRealData}
	\end{center}
\end{figure*} 
We can see that the target function is only decreased very slowly by all algorithms, where ASimCO saturates at a seemingly suboptimal value, which is also illustrated by the recovered operator shown in Figure~\ref{fig:SAOLFAOLRealData}.

As we choose the optimal stepsize for FAOL, we further cannot hope to increase convergence speed significantly using an explicit descent algorithm.

Still, if we look at the learned operator, we can see the merit of our method. After 100000 iterations, the learned operator seems to consist of pooled edge detectors, which are known to cosparsify piecewise constant grayscale images. Note also that the $d\times d$ analysis operator is naturally very different from any $d\times d$ dictionary we could have learned with corresponding sparsity level $S=d-\ell$, see e.g \cite{sc15}. This is due to the fact that image patches are not isotropic, but have their energy concentrated in the low frequency ranges. So while both the $d\times d$ dictionary and the analysis operator will not have (stable) full rank, the dictionary atoms will tend to be in the low frequency ranges, and the analysers will - as can be seen - tend to be in the high frequency ranges.\\
We also want to mention that for image data the replacement strategy for $\mu_0=0.99$ is hardly ever activated. Lowering the threshold results in continuous replacement and refinding of the same analysers. This phenomenon is again explained by the lack of isotropy and the shift invariant structure of the patch data, for which translated and thus coherent edge detectors, as seen in Figure~\ref{fig:SAOLFAOLRealData}, naturally provide good cosparsity.\\
Encouraged by the learned operator we will explore in the next section how to stabilise the algorithm and accelerate its convergence on image data.

\section{Two implicit operator learning algorithms - IAOL and SVAOL}\label{sec:backward}
Due to the small optimal stepsize that has to be chosen on real data and the resulting slow convergence, we need to rethink our approach and enforce stability of the algorithm even with larger stepsizes.  

\subsection{The IAOL algorithm}

In standard gradient descent, for each row of $\Gamma$, we have the iteration 
\begin{align}
\bar \gamma_k = \gamma_k -\alpha\nabla f_N(\Gamma)_k.
\end{align}
Rewriting yields \begin{align}\frac{\bar \gamma_k - \gamma_k}{\alpha}=-\nabla f_N(\Gamma)_k,\end{align}which can be interpreted as an explicit Euler step for the system of ordinary differential equations
\begin{align}\label{eq:EmpiricalODE}
\dot \gamma_k =  -\nabla f_N(\Gamma)_k,\,\, k\in[K].
\end{align}
The explicit Euler scheme is the simplest integration scheme for ordinary differential equations and known to have a very limited region of convergence with respect to the stepsize. In our case, this means that we have to choose extremely small values for the descent parameter $\alpha$ in order to achieve convergence.\\
The tried and tested strategy to overcome stability issues when numerically solving differential equations is to use an implicit scheme for the integration,~\cite{hairer1993Solving,hairer2010Solving}.
We will use this as an inspiration to obtain a more stable learning algorithm.\\
We briefly sketch the ideas behind an implicit integration scheme. Suppose we want to solve the differential equation $\dot x = f(x)$. If we discretise $x(t)$ and approximate the derivative by $\dot x(t_n) \approx \tfrac{x(t_n)-x(t_{n-1})}{t_n-t_{n-1}}$, we have to choose whether we use the approximation $\dot x(t_n) = f(x(t_n))$ or $\dot x(t_n) = f(x(t_{n-1}))$.
Choosing $f(x(t_{n-1}))$ yields the explicit Euler scheme, which in our setting corresponds to the FAOL algorithm. If we choose $f(x(t_{n}))$ we obtain the implicit Euler scheme and need to solve 
\begin{align}
\frac{x(t_n)-x(t_{n-1})}{t_n-t_{n-1}} = f(x(t_n)).
\end{align}
If $f(x) = Ax$ is linear, this leads to the recursion 
\begin{align}
x(t_n) = (\id -(t_n-t_{n-1})A)^{-1} x(t_{n-1}),
\end{align}
and in each step we need to solve a system of linear equations. This makes implicit integration schemes inherently more expensive than explicit schemes. However, in return we get additional stability with respect to the possible stepsizes. If $f$ is a nonlinear function, the inversion is more difficult and can often only be approximated for example via a Newton method.\\
Mapping everything to our setting, we observe that the gradient $\nabla f_N(\Gamma)$ is nonlinear because the sets $J_n$ depend on $\Gamma$. Still, due to the special structure of the gradient $\nabla f_N(\Gamma)$, it has a simple linearisation, $\nabla f_N(\Gamma)_k = 2\gamma_k \sum_{n\colon k\in J_n}y_n y_n^\star$.
We can now use the current iterate of $\Gamma$ to compute the matrix $A_k(\Gamma):=\sum_{n\colon k\in J_n}y_n y_n^\star$ and to linearise the equation. For our operator learning problem, we get the following linearised variant of the implicit Euler scheme
\begin{align}
\frac{\bar \gamma_k-\gamma_k}{\alpha} = -\bar \gamma_k A_k(\Gamma),
\end{align}
leading to the recursion
\begin{align}
\bar \gamma_k = \gamma_k (\id + \alpha A_k(\Gamma))^{-1} 
\end{align}
Due to the unconditional stability of the implicit Euler scheme,~\cite{hairer2010Solving}, we can take $\alpha$ considerably larger than in case of FAOL or SAOL. We only need to make sure that one step of the algorithm does not take us too close to zero, which is a stable attractor of the unconstrained system. In order to stay within the manifold $\Acal$, we again have to renormalise after each step. The final algorithm is summarised in Table~\ref{tab:IAOL}.\\
\begin{table}
	\centering
	\noindent\fbox{%
		\parbox{0.465\textwidth}{%
			{\bf IAOL($\Gamma,\ell,Y,\alpha$) - (one iteration)} \\
			\begin{itemize}
				\item For all n:
				\begin{itemize}
					\item Find $J_n = \argmin_{|J| = \ell} \Vert \Gamma_J y_n\Vert_2^2$.
					\item For all $k\in J_n$ update $A_k=A_k+y_n y_n^\star$.
				\end{itemize}
				\item For all $k\in [K]$ set $\bgamma_k = \gamma_k \left(\id  + \alpha A_k\right)^{-1}$.
				\item Output $\bGamma =(\frac{\bgamma_1}{\Vert \bgamma_1\Vert_2},\ldots,\frac{\bgamma_K}{\Vert \bgamma_K\Vert_2})^\star$.
			\end{itemize}
		}
	}
	\caption{The IAOL algorithm}
	\label{tab:IAOL}
\end{table}

Let us take a short look at the computational complexity of the implicit algorithm and the price we have to pay for increased stability. As in the previous section, we need to compute all products of the vectors $y_n$ with the current iterate $\Gamma$, costing $\Ocal(NKd)$. Further, in each step we need to solve $K$ linear systems of size $d\times d$, amounting to an additional cost of $\Ocal(Kd^2)$. So, altogether for one step, we arrive at $\Ocal(NKd+Kd^2)=\Ocal(NKd)$. 
However, in contrast to FAOL, the IAOL algorithm cannot be made sequential, unless we are willing to store the $K$ matrices $A_k$ in each step. This amounts to an additional spatial complexity of $\Ocal(Kd^2)$, and only pays off for $N>Kd$, since the storage cost of the data matrix is $\Ocal(Nd)$. Note that in contrast to FAOL, the explicit calculation of the matrices $A_k$ is necessary, since we need to solve a system of linear equations.
As for the FAOL algorithm, we can guarantee decrease or preservation of the target function by the IAOL algorithm.

\begin{theorem}\label{thm:IAOLdecrease}
	The IAOL algorithm decreases or preserves the value of the target function in each step regardless of the choice of $\alpha>0$. Preservation rather than decrease of the objective function can only occur if all rows $\gamma_k$ of the current iterate $\Gamma$ are eigenvectors of the matrices $A_k(\Gamma)$.
\end{theorem}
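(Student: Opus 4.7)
The plan is to mirror the structure of the proof of Theorem~\ref{thm:FAOLdecrease}: sandwich $f_N(\bar\Gamma)$ by writing
\[ f_N(\Gamma) = \sum_k \gamma_k A_k \gamma_k^\star \;\geq\; \sum_k \tilde\gamma_k A_k \tilde\gamma_k^\star \;\geq\; f_N(\bar\Gamma), \]
where $\tilde\gamma_k$ denotes the normalised row $\bar\gamma_k/\|\bar\gamma_k\|_2$ and $A_k = A_k(\Gamma)$. The second inequality is automatic from the definition of the new optimal cosupports $\bar J_n$ exactly as in the FAOL proof, so the entire content is showing the first inequality rowwise: for every $k$,
\[ \tilde\gamma_k A_k \tilde\gamma_k^\star \leq \gamma_k A_k \gamma_k^\star. \]

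To do this I would diagonalise the symmetric positive semidefinite matrix $A_k = U\Lambda U^\star$ with $\Lambda = \diag(\lambda_1,\ldots,\lambda_d)$, $\lambda_i\geq 0$, and write $v := \gamma_k U$ so that $\|v\|_2 = 1$. In these coordinates the update becomes $\bar\gamma_k U = \bigl(v_1/(1+\alpha\lambda_1),\ldots,v_d/(1+\alpha\lambda_d)\bigr)$, and the claim reduces to the scalar inequality
\[ \frac{\sum_i \lambda_i v_i^2/(1+\alpha\lambda_i)^2}{\sum_i v_i^2/(1+\alpha\lambda_i)^2} \;\leq\; \sum_i \lambda_i v_i^2. \]
Clearing denominators and using $\sum_i v_i^2 = 1$, this is equivalent to
\[ \sum_{i,j} \lambda_i v_i^2 v_j^2\Bigl(\tfrac{1}{(1+\alpha\lambda_j)^2} - \tfrac{1}{(1+\alpha\lambda_i)^2}\Bigr) \geq 0, \]
which I would prove by symmetrising the double sum over the pair $(i,j)$: the symmetrised summand equals
\[ v_i^2 v_j^2 (\lambda_i-\lambda_j)\Bigl(\tfrac{1}{(1+\alpha\lambda_j)^2} - \tfrac{1}{(1+\alpha\lambda_i)^2}\Bigr), \]
and the two factors in brackets have identical sign because $t\mapsto 1/(1+\alpha t)^2$ is monotonically decreasing on $[0,\infty)$ for $\alpha>0$. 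Hence every symmetrised term is nonnegative and the inequality holds.

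For the equality characterisation, the symmetrised terms vanish only when, for every pair $(i,j)$ with $v_i v_j\neq 0$, $\lambda_i = \lambda_j$; that is, $v$ (and therefore $\gamma_k$) is supported in a single eigenspace of $A_k$, i.e. $\gamma_k$ is an eigenvector of $A_k(\Gamma)$. Conversely, if $\gamma_k A_k = \lambda\gamma_k$ then $\gamma_k(\id+\alpha A_k)^{-1} = \gamma_k/(1+\alpha\lambda)$ and after renormalisation $\tilde\gamma_k = \gamma_k$, so $\bar\Gamma = \Gamma$ and the target function is trivially preserved. The main obstacle is really just the monotone-rearrangement step above; once that is in place, the fact that $\alpha$ can be arbitrary (unconditional stability) drops out because the argument never uses smallness of $\alpha$ — only positivity and monotonicity of $1/(1+\alpha t)^2$.
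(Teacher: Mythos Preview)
Your proof is correct and follows essentially the same route as the paper's: reduce to the rowwise inequality $\tilde\gamma_k A_k \tilde\gamma_k^\star \leq \gamma_k A_k \gamma_k^\star$, diagonalise $A_k$, and symmetrise the resulting double sum. The paper carries out the symmetrisation via an explicit computation of the coefficients $a_{il}$ together with the $S_1+S_2=0$, $S_2-S_1\geq 0$ trick, whereas you obtain the sign directly from the monotonicity of $t\mapsto 1/(1+\alpha t)^2$; these are the same argument, with your formulation being a little cleaner and making the role of $\alpha>0$ (but otherwise arbitrary) more transparent.
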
 

\begin{proof} In order to simplify notation, we drop the indices and write $\Gamma$ for the current iterate. As we will do the computations only for a fixed row $\gamma_k$ of $\Gamma$, we denote it by $\gamma$. We further write $A$ for the matrix corresponding to $\gamma$ and $\bgamma$ for the next iterate.
We want to show
\[\bgamma A \bgamma^\star - \gamma A\gamma^\star = \frac{\gamma A(\id +\alpha A)^{-2} \gamma^\star}{\gamma (\id +\alpha A)^{-2}\gamma^\star} - \gamma A\gamma^\star\leq0,\]
which is implied by
\ifthenelse{\boolean{onecol}}{
\begin{align*}\gamma(A(\id +\alpha A)^{-2} - A\gamma^\star\gamma (\id +\alpha A)^{-2})\gamma^\star = \gamma (A(\id - \gamma^\star\gamma)(\id +\alpha A)^{-2})\gamma^\star \leq0,\end{align*}}
{\begin{align*}\gamma(A(\id +\alpha A)^{-2} - A\gamma^\star\gamma (\id +\alpha A)^{-2})\gamma^\star =\\= \gamma (A(\id - \gamma^\star\gamma)(\id +\alpha A)^{-2})\gamma^\star \leq0,\end{align*}}
as the denominator is positive. We now use the eigendecomposition of the symmetric, positive semidefinite matrix $A$, that is $A = \sum_i \lambda_i u_i u_i^\star$, where $\lambda_i\geq0$ for all $i$ and $(u_i)_{i\in[d]}$ is an orthonormal basis.

Inserting this, a short computation shows that
\ifthenelse{\boolean{onecol}}
{\begin{align*}
\gamma (A(\id - \gamma^\star\gamma)(\id +\alpha A)^{-2})\gamma^\star = \sum_i \sum_{l\neq i} \lambda_i\underbrace{\frac{\alpha (2+\alpha(\lambda_i+\lambda_l))|\ip{\gamma}{u_i}|^2|\ip{\gamma}{u_l}|^2}{(1+\alpha \lambda_i)^2(1+\alpha\lambda_l)^2}}_{=:a_{il}}(\lambda_l-\lambda_i).
\end{align*}}
{
\begin{multline*}
	\gamma (A(\id - \gamma^\star\gamma)(\id +\alpha A)^{-2})\gamma^\star =\\= \sum_i \sum_{l\neq i} \lambda_i\underbrace{\tfrac{\alpha (2+\alpha(\lambda_i+\lambda_l))|\ip{\gamma}{u_i}|^2|\ip{\gamma}{u_l}|^2}{(1+\alpha \lambda_i)^2(1+\alpha\lambda_l)^2}}_{=:a_{il}}(\lambda_l-\lambda_i).
	\end{multline*}}
Note that $a_{il} = a_{li}\geq0$. Further, we can drop the condition $l\neq i$ in the sums above, as the term corresponding to the case $i=l$ is zero.

In order to show that the sum $S_1:= \sum_{i,l} \lambda_i a_{il}(\lambda_l - \lambda_i)$ is never positive, define a second sum $S_2 := \sum_{i,l} \lambda_l a_{il}(\lambda_l - \lambda_i)$. Then, by antisymmetry, we have that $S_1+S_2 = \sum_{i,l} (\lambda_i+\lambda_l) a_{il}(\lambda_l - \lambda_i) = 0$. 
Further, $S_2 -S_1= \sum_{i,l} a_{il}(\lambda_l - \lambda_i)^2\geq0$, from which follows that $S_1\leq 0$. The whole discussion is independent of $\alpha>0$, so any viable choice of $\alpha$ results in a decrease of the objective function.

Assume now that $\bgamma A \bgamma^\star - \gamma A\gamma^\star = 0$. Then also $\gamma (A(\id - \gamma^\star\gamma)(\id +\alpha A)^{-2})\gamma^\star = S_1 =0$. This in turn implies that $S_2 = 0$ and $S_2-S_1 = 0$.
As every term in $\sum_{i,l} a_{il}(\lambda_l - \lambda_i)^2$ is positive or zero, we have that for all $i\neq l$ also $a_{il}(\lambda_l - \lambda_i)^2$ must be zero.
If all eigenvalues of the matrix $A$ are distinct this implies that $a_{il} = 0$ for all $i\neq l$. This in turn implies that for all $i\neq l$ the product $|\ip{\gamma}{u_i}|^2|\ip{\gamma}{u_l}|^2=0$, so either the overlap of $\gamma_k$ with $u_i$ or $u_l$ is zero. But this means that $\gamma_k$ must be equal to one of the eigenvectors.
If not all eigenvalues of the matrix $A$ are distinct, then the previous discussion still holds for the eigenvalues which are distinct. Assume that $i\neq l$ and $\lambda_i = \lambda_j$. Then $a_{il}(\lambda_l - \lambda_i)^2 = 0$ regardless of the value of $a_{il}$, so if $\gamma\in \mathrm{span}\{u_i,u_l\}$, we still have that $S_2-S_1 = 0$. This shows that in all cases, where the target function does not decrease, $\gamma$ needs to be an eigenvector of $A$.
\end{proof}

Note that the IAOL algorithm essentially performs a single step of an inverse iteration to compute the eigenvectors corresponding to the smallest eigenvalues of the matrices $A_k$. We will use this fact in the next section to introduce our last algorithm to learn analysis operators.

\subsection{The SVAOL algorithm}

Revisiting condition~\eqref{eq:alphaCondition} suggests another algorithm for learning analysis operators. The stepsize choice essentially amounts to
\[\bgamma_k = \argmin_{v\in\mathcal{K}_2(\gamma_k,A_k)\cap(\S^{d-1})^\star}\frac{v A_k v^\star}{v v^\star},\] 
where $\mathcal{K}_2(\gamma_k,A_k) = \text{span}\{\gamma_k,\gamma_k A_k\}$, the Krylov space of order 2.
Removing the restriction that $\bgamma_k$ must lie in the Krylov space $\mathcal{K}_2(\gamma_k,A_k)$ yields the update step
\[\bgamma_k = \argmin_{v\in(\S^{d-1})^\star}v A_k v^\star,\]
which means that $\bgamma_k$ is the eigenvector corresponding to the smallest eigenvalue of the matrix $A_k$. The resulting algorithm, called SVAOL, is summarised in Table~\ref{tab:SVAOL}.

\begin{table}
	\centering
	\noindent\fbox{%
		\parbox{0.465\textwidth}{%
			{\bf SVAOL($\Gamma,\ell,Y$) - (one iteration)} \\				For all $k\in[K]$, set $A_k = 0$.
			\begin{itemize}
				
				\item For all n:
				\begin{itemize}
					\item Find $J_n = \argmin_{|J| = \ell} \Vert \Gamma_J y_n\Vert_2^2$.
					\item For all $k\in[K]$ update $A_k = A_k+y_n y_n^\star$ if $k\in J_n$.
				\end{itemize}
				\item For all $k\in[K]$, set $\bar \gamma_k = \argmin_{v\in(\S^{d-1})^\star}v A_k v^\star$.
				\item Output $\bGamma =(\bgamma_1,\ldots,\bgamma_K)^\star$.
			\end{itemize}
		}
	}
	\caption{The SVAOL algorithm}
	\label{tab:SVAOL}
\end{table}

The obtained SVAOL algorithm bears close resemblance to the 'Sequential Minimal Eigenvalues' algorithm devised in~\cite{ophir2011sequential}. However, a key difference is that the computation of the rows of the target operator is not done sequentially in the SVAOL algorithm. Furthermore, the following theorem concerning decrease of the target function can be established.

\begin{theorem}\label{thm:SVAOLdecrease} 
The SVAOL algorithm decreases or preserves the value of the target function in each step. The only case when it preserves the value of the target function is when the rows $\gamma_k$ are already eigenvectors corresponding to the smallest eigenvalues of the matrices $A_k$.
\end{theorem}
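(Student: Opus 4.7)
The plan is to adapt the chain-of-inequalities argument that established Theorem~\ref{thm:FAOLdecrease}, replacing the optimal-stepsize inequality used there with the Rayleigh-quotient inequality that makes the SVAOL update the exact global minimiser of $v A_k v^\star$ on the unit sphere. Recalling from Section~\ref{sec:cheap} the identity $f_N(\Gamma) = \sum_{k=1}^K \gamma_k A_k(\Gamma) \gamma_k^\star$ with $A_k(\Gamma) = \sum_{n\colon k \in J_n} y_n y_n^\star$, monotonicity of $f_N$ splits naturally into a row update at fixed cosupports $J_n$ followed by a cosupport update at fixed rows.

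Concretely, I would write out the sandwich
\begin{align*}
f_N(\Gamma) = \sum_{k=1}^K \gamma_k A_k \gamma_k^\star &\geq \sum_{k=1}^K \bgamma_k A_k \bgamma_k^\star = \sum_{k=1}^K \sum_{n\colon k\in J_n} |\ip{\bgamma_k}{y_n}|^2 \\
&\geq \sum_{k=1}^K \sum_{n\colon k\in \bar J_n} |\ip{\bgamma_k}{y_n}|^2 = f_N(\bGamma),
\end{align*}
where $\bar J_n = \argmin_{|J|=\ell} \Vert \bGamma_J y_n \Vert_2^2$ denotes the cosupport recomputed from the updated operator in the next iteration. The first inequality is pointwise in $k$ and is precisely the Courant--Fischer characterisation of the smallest eigenvalue of the symmetric positive semidefinite matrix $A_k$, realised with equality by the SVAOL update $\bgamma_k$. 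The second inequality, after exchanging the order of summation, reduces to $\Vert \bGamma_{J_n} y_n \Vert_2^2 \geq \Vert \bGamma_{\bar J_n} y_n \Vert_2^2$, which holds termwise in $n$ by the defining minimality of $\bar J_n$.

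For the equality characterisation, if $f_N(\bGamma) = f_N(\Gamma)$ both inequalities must collapse to equalities. Since every summand $\gamma_k A_k \gamma_k^\star - \bgamma_k A_k \bgamma_k^\star$ of the first inequality is non-negative, equality of the sums forces $\gamma_k A_k \gamma_k^\star = \min_{\Vert v \Vert_2 = 1} v A_k v^\star$ for every $k$; by the variational characterisation of eigenvalues this means $\gamma_k$ lies in the eigenspace of $A_k$ associated with its smallest eigenvalue. I do not anticipate any real obstacle here; the only mild subtlety worth acknowledging is that in the presence of degeneracy of the smallest eigenvalue the updated row $\bgamma_k$ may differ from $\gamma_k$ while still yielding the same Rayleigh quotient, which is consistent with preservation of $f_N$ as claimed.
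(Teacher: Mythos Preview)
Your proposal is correct and follows essentially the same argument as the paper's own proof: the same two-step chain of inequalities, with the Rayleigh-quotient (smallest eigenvalue) bound for the row update and the minimality of $\bar J_n$ for the cosupport update, and the same termwise equality analysis for the preservation case. Your added remark on degeneracy of the smallest eigenvalue is a helpful clarification not spelled out in the paper.
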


The results for SVAOL given in Theorem~\ref{thm:SVAOLdecrease} improve the results obtained for IAOL in Theorem~\ref{thm:IAOLdecrease}. 
Now the decrease of the target function can be guaranteed if not all rows of the current iterate $\Gamma$ are already the eigenvectors corresponding to the smallest eigenvalues of the matrices $A_k$.\footnote{This means that if the target function is differentiable in $\Gamma$ and cannot be decreased by the SVAOL algorithm, we have already arrived at a local minimum. As we have stated previously, however, the target is not differentiable everywhere and thus this cannot be used to derive a local optimality result.}

\begin{proof}
 To show this, denote by $(A_k)_{k\in [K]}$ and $(\bar A_k)_{k\in [K]}$ the matrices defined in Table~\ref{tab:SVAOL} for the operators $\Gamma$ and $\bGamma$, respectively. Further denote by $(\sigma_k)_{k\in[K]}$ and $(\bar\sigma_k)_{k\in[K]}$ their smallest singular values.

Then

\begin{align*}
	f(\Gamma) & = \sum_{k=1}^K \gamma_k A_k \gamma_k^\star \geq \sum_{k=1}^K \sigma_k = \sum_{k=1}^K \bgamma_k A_k\bgamma_k^\star =\\
	&= \sum_{k=1}^K \sum_{n\colon k\in J_n} |\ip{\bgamma_k}{y_n}|^2 \geq \sum_{k=1}^K \sum_{n\colon k\in \bar J_n} |\ip{\bgamma_k}{y_n}|^2 =\\
	&= \sum_{k=1}^K \bgamma_k \bar A_k \bgamma_k^\star = f(\bGamma)
	\end{align*}
due to the definition of the sets $J_n$ and $\bar J_n$.
The first inequality is strict, except in the case when $\gamma_k$ are already eigenvectors of $A_k$ corresponding to the smallest eigenvalues.
\end{proof}

Finding the eigenvectors corresponding to the smallest eigenvalues of the matrices $A_k$ is indeed the desired outcome, which can be seen as follows.
First, note that the matrices $A_k(\Gamma)$ are (up to a constant) empirical estimators of the matrices $\mathbb{A}_k(\Gamma):=\E y y^\star \chi_{\{y\colon k\in J^\Gamma(y)\}}$, where $J^\Gamma(y) = \argmin_{|J| = \ell} \Vert \Gamma_J y\Vert_2^2$.
The rows $\omega_k$ of the target operator $\Omega$ are (in the noisefree setting) always eigenvectors to the eigenvalue zero for the matrix $\mathbb{A}_k(\Omega)$, since according to the signal model given in Table~\ref{tab:SignalModel}, we have
\ifthenelse{\boolean{onecol}}{
\begin{eqnarray*}
\mathbb{A}_k(\Omega) = \E y y^\star\chi_{\{y\colon k\in J^\Omega(y)\}}&=& \E (\id - \Omega^\dagger_\Lambda \Omega_\Lambda) z z^\star(\id - \Omega^\dagger_\Lambda \Omega_\Lambda)\chi_{\{(\Lambda,z)\colon k\in\argmin_{|J| = \ell} \Vert \Omega_J (\id - \Omega^\dagger_\Lambda \Omega_\Lambda) z\Vert_2^2\}}\\
&=&\E_\Lambda (\id - \Omega^\dagger_\Lambda \Omega_\Lambda)\E_z z z^\star(\id - \Omega^\dagger_\Lambda \Omega_\Lambda)\chi_{\{\Lambda\colon k\in\Lambda\}}\\
&=&\E_\Lambda  (\id - \Omega^\dagger_\Lambda \Omega_\Lambda)\chi_{\{\Lambda\colon k\in\Lambda\}}\\
&=&{K\choose\ell}^{-1}\sum_{\Lambda\colon k\in\Lambda}  (\id - \Omega^\dagger_\Lambda \Omega_\Lambda).
\end{eqnarray*}}
{
	\begin{eqnarray*}
		\mathbb{A}_k(\Omega) &=& \E y y^\star\chi_{\{y\colon k\in J^\Omega(y)\}}\\&=& \E (\id - \Omega^\dagger_\Lambda \Omega_\Lambda) z z^\star(\id - \Omega^\dagger_\Lambda \Omega_\Lambda)\chi_{\{(\Lambda,z)\colon k\in J_{(\Lambda,z)}\}}\\
		&=&\E_\Lambda (\id - \Omega^\dagger_\Lambda \Omega_\Lambda)\E_z z z^\star(\id - \Omega^\dagger_\Lambda \Omega_\Lambda)\chi_{\{\Lambda\colon k\in\Lambda\}}\\
		&=&\E_\Lambda  (\id - \Omega^\dagger_\Lambda \Omega_\Lambda)\chi_{\{\Lambda\colon k\in\Lambda\}}\\
		&=&{K\choose\ell}^{-1}\sum_{\Lambda\colon k\in\Lambda}  (\id - \Omega^\dagger_\Lambda \Omega_\Lambda),
\end{eqnarray*}}

where $J_{(\Lambda,z)}= \argmin_{|J| = \ell} \Vert \Omega_J (\id - \Omega^\dagger_\Lambda \Omega_\Lambda) z\Vert_2^2$.

Multiplying this matrix with $\omega_k$ yields zero, as $k$ always lies in $\Lambda$ and so every term in the sum maps $\omega_k$ to zero.

The Analysis K-SVD algorithm~\cite{rupeel13} takes a similar approach as the SVAOL algorithm. At first, cosupport sets are estimated and then the singular vector to the smallest singular value of the resulting data matrix is computed. The notable difference, however, is how the cosupport set is estimated. We use a hard thresholding approach, whereas for Analysis K-SVD an analysis sparse-coding step is employed, which uses significantly more computational resources.

We see that the computation and storage complexity for the first steps (i.e. setting up and storing the matrices $A_k$) of the SVAOL algorithm are the same as for IAOL. This means that the spatial complexity of SVAOL is $\Ocal(Kd^2)$. For the runtime complexity, which is $\Ocal(NKd)$ for the setup of the matrices, we need to include the cost of computing the $K$ smallest singular values of the matrices $A_k$. This can be done for example using the SSVII algorithm presented in~\cite{schwetlick2003iterative}, in which for each iteration a system of linear equations of size $d+1$ has to be solved. We observed that typically 10 iterations of the algorithm are sufficient, so the computational complexity for this step is 10 times higher than for IAOL.

\subsection{Experiments on synthetic data}\label{sec:comparison}
As for the explicit algorithms presented above, we first try our new algorithm on synthetic data. For this, we again learn an operator from data generated from a random operator with normalised rows in $\R^{128\times 64}$. The setup is the same as in Section~\ref{sec:StupidNum} and the results are shown in Figure~\ref{fig:BackwardNoisefree}. We use a large stepsize $\alpha = 100$ in order to achieve fast convergence.

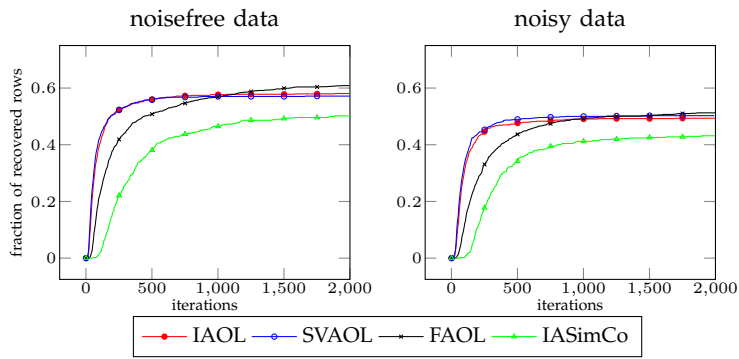
\begin{figure}
	\centering
	\begin{tikzpicture}
	\begin{groupplot}[
	group style={group size=2 by 1, horizontal sep = 1cm}]
	\nextgroupplot[title = {\footnotesize noisefree data}, title style={yshift=-.5ex},legend to name={CommonLegend},legend style={legend columns=4, font = \scriptsize},
	xmax = 2000,
	xlabel=iterations,
	ylabel= fraction of recovered rows,
	yticklabel style = {font=\tiny,xshift=0.5ex},
	xticklabel style = {font=\tiny,yshift=0.5ex},
	ylabel style = {font=\tiny,yshift =-.65cm},
	xlabel style = {font=\tiny,yshift = .4cm},
	ymax=0.75, 
	width = 0.3\columnwidth]
	
	\addplot[color=red,mark = *,mark size=1pt, mark repeat = 50] table[x=t,y=l55n0]{NewPics/IAOL_Synthetic_Noisefree.txt};  
	\addplot[color=blue,mark = o,mark size=1pt, mark repeat = 50] table[x=t,y=l55n0]{NewPics/MEAOL_Synthetic_Noisefree.txt}; 
	\addplot[color=black,mark = x,mark size=1pt, mark repeat = 50] table[x=t,y=l55n0]{NewPics/FAOL_Synthetic_Noisefree.txt}; 
	\addplot[color=green,mark = triangle,mark size=1pt, mark repeat = 50] table[x=t,y=l55n0]{NewPics/ASimCO_Synthetic_Noisefree.txt}; 
	
	\addlegendimage{red, mark=*}
	\addlegendentry{IAOL}
	\addlegendentry{SVAOL}
	\addlegendentry{FAOL}
	\addlegendentry{IASimCo}
	
	\nextgroupplot[title = {\footnotesize noisy data},
	title style={yshift=-1ex},
	xmax = 2000,
	xlabel=iterations,
	yticklabel style = {font=\tiny,xshift=0.5ex},
	xticklabel style = {font=\tiny,yshift=0.5ex},
	ylabel style = {font=\tiny,yshift =-.65cm},
	xlabel style = {font=\tiny,yshift = .4cm},
	ymax=0.75, 
	width = 0.3\columnwidth
	]
	\addplot[color=red,,mark = *,mark size=1pt, mark repeat = 50] table[x=t,y=l55n02]{NewPics/IAOL_Synthetic_Noisy.txt};  
	\addplot[color=blue, ,mark = o,mark size=1pt, mark repeat = 50] table[x=t,y=l55n02]{NewPics/MEAOL_Synthetic_Noisy.txt}; 
	\addplot[color=black,mark = x,mark size=1pt, mark repeat = 50] table[x=t,y=l55n02]{NewPics/FAOL_Synthetic_Noisy.txt}; 
	\addplot[color=green,mark = triangle,mark size=1pt, mark repeat = 50] table[x=t,y=l55n02]{NewPics/ASimCO_Synthetic_Noisy.txt}; 
	
	\end{groupplot}
	\path (group c1r1.south east) -- node[below=10pt]{\ref{CommonLegend}} (group c2r1.south west);
	\end{tikzpicture}
	\caption{Recovery rates of IAOL and SVAOL in a noisefree (left) and a noisy (right) setting compared to FAOL and ASimCO using cosparsity level $\ell = 55$ using a random initialisation.}
	\label{fig:BackwardNoisefree}
\end{figure}

Note that IAOL and SVAOL saturate faster than FAOL, cf. Figure~\ref{fig:StupidNoisefree}. However, IAOL and SVAOL without replacement recover slightly fewer  rows as FAOL, which is probably a result of the faster convergence speed.\\
Finally, since the implicit algorithms per se, like FAOL, do not penalise the recovery of two identical rows, cf. Figure~\ref{fig:MultipleRecovery}, we again need to use the replacement strategy introduced in Section~\ref{sec:ReplacementSimple}. 

\begin{figure}
	\centering
	\begin{tikzpicture}
	\begin{groupplot}[
	group style={group size=2 by 1, horizontal sep = 1cm}]
	\nextgroupplot[title = {\footnotesize noisefree data}, title style={yshift=-.5ex},legend to name={CommonLegend},legend style={legend columns=4, font = \scriptsize},
	xlabel=iterations,
	ylabel= fraction of recovered rows,
	yticklabel style = {font=\tiny,xshift=0.5ex},
	xticklabel style = {font=\tiny,yshift=0.5ex},
	ylabel style = {font=\tiny,yshift =-.65cm},
	xlabel style = {font=\tiny,yshift = .4cm},
	width = 0.3\columnwidth, 
	xtick = {0,5000,10000},
	xticklabels = {$0$,$5000$,$10000$},
	ymax =1.05, ymin = 0]
	\addplot[color=red, mark = *, mark size = 1pt, mark repeat = 200] table[x=t,y=targIAOL]{NewPics/CompareAllSyntheticNoisefreeReplacement.txt};  
	\addplot[color=blue, mark = o, mark size = 1pt, mark repeat = 200] table[x=t,y=targMEAOL]{NewPics/CompareAllSyntheticNoisefreeReplacement.txt};
	\addplot[color=black, mark = x, mark size = 1pt, mark repeat = 200] table[x=t,y=targFAOL]{NewPics/CompareAllSyntheticNoisefreeReplacement.txt};
	\addplot[color=green,mark = triangle,mark size=1pt, mark repeat = 200] table[x=t,y=targSimCo]{NewPics/CompareAllSyntheticNoisefreeReplacement.txt}; 
	\addlegendimage{red, mark=*}
	\addlegendentry{IAOL}
	\addlegendentry{SVAOL}
	\addlegendentry{FAOL}
	\addlegendentry{IASimCo}
	
	\nextgroupplot[title = {\footnotesize noisy data},
	title style={yshift=-1ex},
	width=0.3\columnwidth,
	xlabel=iterations,
	yticklabel style = {font=\tiny,xshift=0.5ex},
	xticklabel style = {font=\tiny,yshift=0.5ex},
	ylabel style = {font=\tiny,yshift =-.65cm},
	xlabel style = {font=\tiny,yshift = .4cm},
	ymax =1.05, 
	xtick = {0,5000,10000},
	xticklabels = {$0$,$5000$,$10000$},
	ymin = 0
	]
	\addplot[color=red, mark = *, mark size = 1pt, mark repeat = 200] table[x=t,y=targIAOL]{NewPics/CompareAllSyntheticNoisyReplacement.txt};  
	\addplot[color=blue, mark = o, mark size = 1pt, mark repeat = 200] table[x=t,y=targMEAOL]{NewPics/CompareAllSyntheticNoisyReplacement.txt}; 
	\addplot[color=black, mark = x, mark size = 1pt, mark repeat = 200] table[x=t,y=targFAOL]{NewPics/CompareAllSyntheticNoisyReplacement.txt};  
	\addplot[color=green,mark = triangle,mark size=1pt, mark repeat = 200] table[x=t,y=targSimCo]{NewPics/CompareAllSyntheticNoisyReplacement.txt}; 
	
	\end{groupplot}
	\path (group c1r1.south east) -- node[below=10pt]{\ref{CommonLegend}} (group c2r1.south west);
	\end{tikzpicture}
	\caption{Recovery rates of IAOL, SVAOL, FAOL and IASimCo with replacement from signals with cosparsity level $\ell=55$ in a noiseless (left) and a noisy (right) setting, using a random initialisation.}
	\label{fig:ReplacementBackward}
\end{figure}
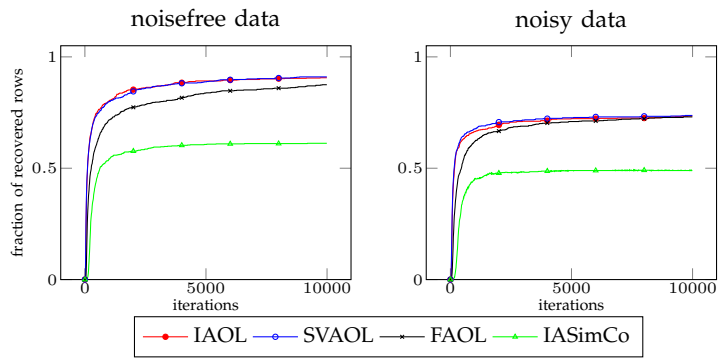
 The simulation results, using replacement with $\mu_0=0.8$ and the usual setup are shown in Figure~\ref{fig:ReplacementBackward}. We see that IAOL and SVAOL come closer to full recovery than their explicit counterpart FAOL within the considered 10000 iterations. Again, for noisy data the algorithms saturate well below full recovery.

\subsection{Experiments on image data}
Finally, we want to see how the stabilised algorithms perform on real data. We use the same image (Shepp Logan) and setup as in Section~\ref{sec:StupidRealData} to learn a square analysis operator for $8\times 8$ patches, cf. Figure~\ref{fig:SAOLFAOLRealData}. We will not use the SAOL algorithm in the simulations from now on, as the execution time in Matlab is considerably higher due to the required for-loops. However, as we have seen, it performs mostly like the FAOL algorithm.\\
\ifthenelse{\boolean{onecol}}{
\begin{figure}[h!]
	\centering
	\begin{tikzpicture}
	\begin{axis}[xlabel=iterations,
	yticklabel style = {font=\tiny,xshift=0.5ex},
	xticklabel style = {font=\tiny,yshift=0.5ex},
	ylabel style = {font=\tiny,yshift =-.65cm},
	xlabel style = {font=\tiny,yshift = .4cm},
	legend style = {font=\tiny, draw = none},
	legend pos=north east, 
	width = 0.3\textwidth, 
	ymode = log, 
	ytick = {0.001,0.0004},
	yticklabels = {$10^{-3}$,$4\cdot 10^{-4}$},
	ymax = 0.001,
	ymin = 0.0004,
	ylabel= value of target function]
	\addplot[color=red, mark = *, mark repeat = 10, mark size = 1pt] table[x=t,y=targIAOL]{NewPics/DecayTargetCompareExplicitImplicit.txt};  
	\addplot[color=blue,mark = o, mark repeat = 10, mark size = 1pt] table[x=t,y=targMEAOL]{NewPics/DecayTargetCompareExplicitImplicit.txt}; 	\addplot[color=black, mark = x, mark repeat = 10, mark size = 1pt] table[x=t,y=targFAOL]{NewPics/DecayTargetCompareExplicitImplicit.txt}; 	
	\legend{IAOL,SVAOL, FAOL}
	\end{axis}
	\end{tikzpicture}	
	\quad
	\includegraphics[width = 0.25\textwidth,height=0.25\textwidth]{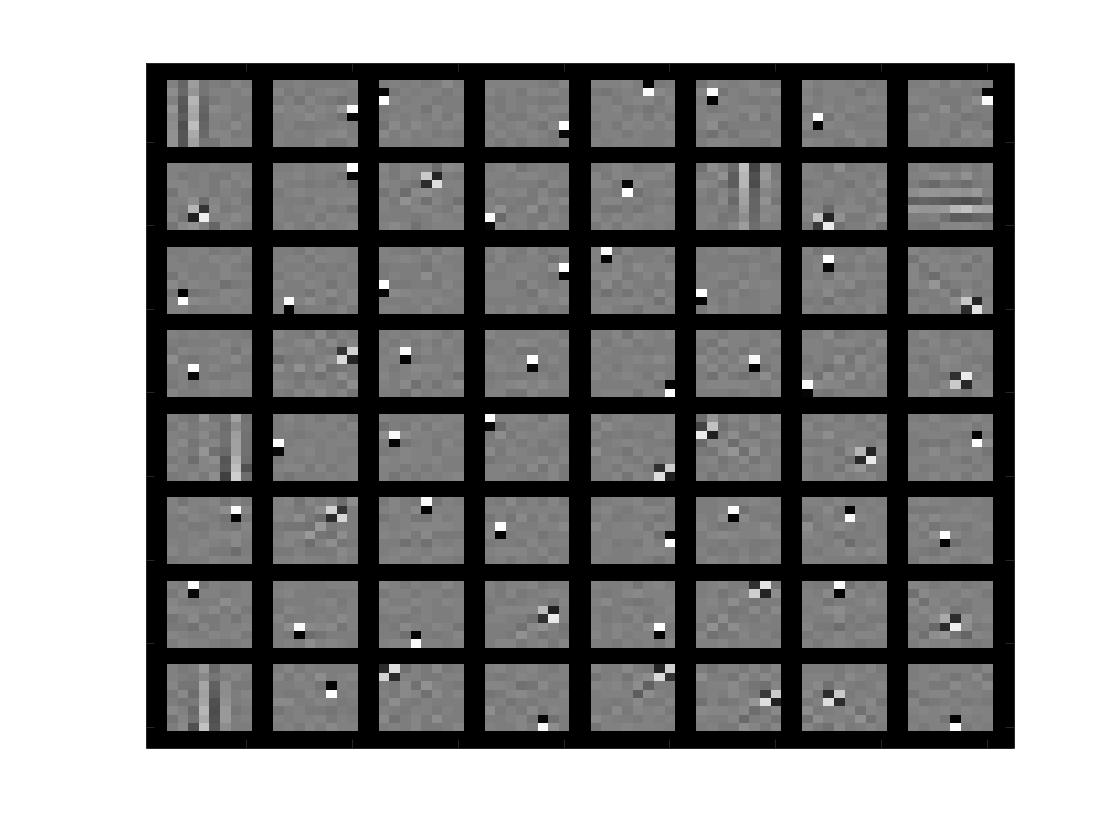}
	
	\caption{Decay of the target function using FAOL, IAOL and SVAOL for the Shepp Logan phantom (left) and the operator recovered by IAOL after 100(!) iterations (right).}
	\label{fig:LearnedOpISAOL}
\end{figure}}
{
	\begin{figure}[h!]
		\centering
		\begin{tikzpicture}
		\begin{axis}[xlabel=iterations,
		yticklabel style = {font=\tiny,xshift=0.5ex},
		xticklabel style = {font=\tiny,yshift=0.5ex},
		ylabel style = {font=\tiny,yshift =-.65cm},
		xlabel style = {font=\tiny,yshift = .4cm},
		legend style = {font=\tiny},
		legend pos=north east, 
		width = 0.55\columnwidth, 
		ymode = log, 
		ytick = {0.001,0.0004},
		yticklabels = {$10^{-3}$,$4\cdot 10^{-4}$},
		ymax = 0.001,
		ymin = 0.0004,
		ylabel= value of target function]
		\addplot[color=red, mark = *, mark repeat = 10, mark size = 1pt] table[x=t,y=targIAOL]{NewPics/DecayTargetCompareExplicitImplicit.txt};  
		\addplot[color=blue,mark = o, mark repeat = 10, mark size = 1pt] table[x=t,y=targMEAOL]{NewPics/DecayTargetCompareExplicitImplicit.txt}; 	\addplot[color=black,mark = x, mark repeat = 10, mark size = 1pt] table[x=t,y=targFAOL]{NewPics/DecayTargetCompareExplicitImplicit.txt}; 	
		\legend{IAOL,SVAOL, FAOL}
		\end{axis}
		\end{tikzpicture}	
		\quad
		\includegraphics[width = 0.22\textwidth,height=0.2\textwidth]{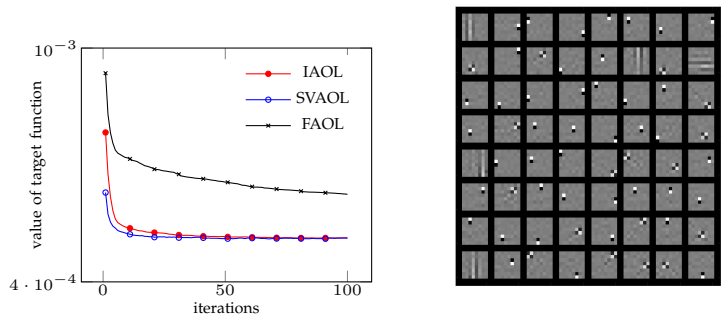}
		
		\caption{Decay of the target function using FAOL, IAOL and SVAOL for the Shepp Logan phantom (left) and the operator recovered by IAOL after 100(!) iterations (right).}
		\label{fig:LearnedOpISAOL}
\end{figure}}

As can be seen in Figure~\ref{fig:LearnedOpISAOL}, the training is much faster now, because the stepsize does not have to be chosen as small as in the previous section. The decrease in the objective function is very fast compared to FAOL, and we see that already after a few iterations the algorithm stabilises and we, as expected, obtain combinations of discrete gradients as anlysers. As for FAOL we observe that the replacement strategy for $\mu_0=0.99$ is hardly ever activated and that lowering the threshold results in finding and replacing the same translated edge detectors.\\
In the remainder of this section, we will investigate the time complexity of the presented algorithms numerically. Naturally, the explicit algorithms use significantly fewer computational resources per iteration. We compare the average calculation times per iteration on a  3.1 GHz Intel Core i7 Processor. For IASimCO the out-of-the-box version on the homepage of the authors of~\cite{dowadaplha16} is used.

Figure~\ref{fig:ComparisonAlgorithmsRealData} (left) shows the recovery rates of the four algorithms plotted against the cumulative execution time. We can see that on synthetic data, the IAOL and SVAOL algorithms show a similar performance to FAOL, followed by IASimCo.  Using FAOL as a baseline, we see that one iteration of ASimCo takes about twice as long, one iteration of IAOL takes about four times as long and one iteration of SVAOL takes 5-6 times longer, but this significantly depends on the number of iterations of the inverse iteration to find the smallest singular value. 

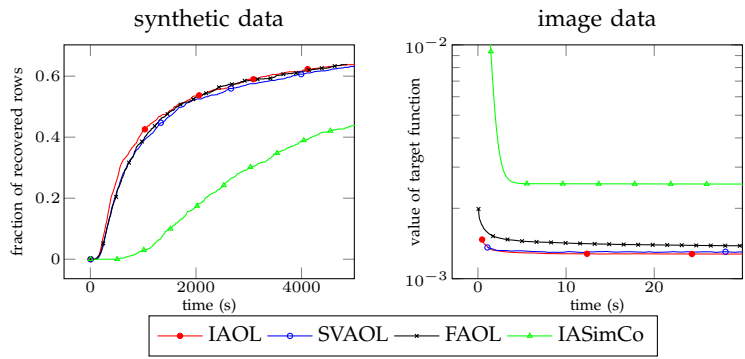
\begin{figure}
\centering
\begin{tikzpicture}
\begin{groupplot}[
group style={group size=2 by 1, horizontal sep = 1.3cm}]
\nextgroupplot[title = {\footnotesize synthetic data}, title style={yshift=-1ex},legend to name={CommonLegend},legend style={legend columns=4, font = \scriptsize},
xlabel=time (s),
yticklabel style = {font=\tiny,xshift=0.5ex},
xticklabel style = {font=\tiny,yshift=0.5ex},
ylabel style = {font=\tiny,yshift =-.6cm},
xlabel style = {font=\tiny,yshift = .4cm},
ylabel= fraction of recovered rows,
width=0.3\columnwidth,
xmax = 5000,
xtick={0,2000,4000},
xticklabels={$0$,$2000$,$4000$}]
\addplot[color=red,mark = *, mark size = 1pt, mark repeat = 50] table[x=tIAOL,y=hitsIAOL]{NewPics/CompareAllSyntheticNoisefreeImprovedTimes.txt};  	
\addplot[color=blue,mark = o, mark size = 1pt, mark repeat = 50] table[x=tSVAOL,y=hitsSVAOL]{NewPics/CompareAllSyntheticNoisefreeImprovedTimes.txt};
\addplot[color=black,mark = x, mark size = 1pt, mark repeat = 50] table[x=tFAOL,y=hitsFAOL]{NewPics/CompareAllSyntheticNoisefreeImprovedTimes.txt};    	 	 
\addplot[color=green,mark = triangle, mark size = 1pt, mark repeat = 50] table[x=tSimCo,y=hitsSimCo]{NewPics/CompareAllSyntheticNoisefreeImprovedTimes.txt};
\addlegendimage{red, mark=*}
\addlegendentry{IAOL}
\addlegendentry{SVAOL}
\addlegendentry{FAOL}
\addlegendentry{IASimCo}

\nextgroupplot[title = {\footnotesize image data},
title style={yshift=-1ex},
xlabel=time (s),
yticklabel style = {font=\tiny,xshift=0.5ex},
xticklabel style = {font=\tiny,yshift=0.5ex},
ylabel style = {font=\tiny,yshift =-.75cm},
xlabel style = {font=\tiny,yshift = .4cm},
ymode = log, ylabel= value of target function,
width=0.3\columnwidth,
xmax = 30,
yminorticks=true,
xtick={0,10,20},
xticklabels={$0$,$10$,$20$},
ymin = 1e-3,ymax = 1e-2
]
\addplot[color=red,mark = *, mark size = 1pt, mark repeat = 50] table[x=tIAOL,y=hitsIAOL]{NewPics/CompareAllSheppLoganImprovedTimes.txt};
\addplot[color=blue,mark = o, mark size = 1pt, mark repeat = 50] table[x=tSVAOL,y=hitsSVAOL]{NewPics/CompareAllSheppLoganImprovedTimes.txt};
\addplot[color=black,mark = x, mark size = 1pt, mark repeat = 50] table[x=tFAOL,y=hitsFAOL]{NewPics/CompareAllSheppLoganImprovedTimes.txt};  
\addplot[color=green,mark = triangle, mark size = 1pt, mark repeat = 50] table[x=tSimCo,y=hitsSimCo]{NewPics/CompareAllSheppLoganImprovedTimes.txt};

\end{groupplot}
\path (group c1r1.south east) -- node[below=10pt]{\ref{CommonLegend}} (group c2r1.south west);
\end{tikzpicture}
\caption{Recovery rates of FAOL, IAOL, SVAOL and IASimCo from 55-cosparse signals in a noisy setting (left). Decay of the target function using IASimCO, FAOL, IAOL and SVAOL to learn a $128\times 64$ operator for the Shepp Logan image (right). The figures depict the execution time of the algorithm on the x-axis. SAOL is omitted, as the execution time is not comparable due to the Matlab implementation.}
\label{fig:ComparisonAlgorithmsRealData}
\end{figure}

In the experiment on image data, we learn an overcomplete operator with $128$ rows from the $8\times8$ patches of the $256\times 256$ (unnormalised) Shepp Logan phantom contamined with Gaussian noise with $\operatorname{PSNR} \approx 20$. We choose as cosparsity level $\ell=120$, initialise randomly and in each iteration use $20000$ randomly selected patches out of the available 62001. Since for image data our replacement strategy is hardly ever activated, we directly omit it to save computation time. IASimCo is again used in its out-of-the-box version.
In Figure~\ref{fig:ComparisonAlgorithmsRealData}, one can clearly observe that the IAOL and SVAOL algorithms indeed minimise the target function in a fraction of the iterations necessary for the FAOL algorithm, which in turn is much faster than IASimCO. Already after $10$ seconds, IAOL and SVAOL have essentially finished minimising the objective function, whereas FAOL needs, as seen in the previous section, about 100000 iterations to get to approximately the same value of the objective function. IASimCo lags behind severely and, as indicated by the shape of the curve, saturates at a highly suboptimal value of the target function. This fact can also be observed by looking at the learned analysis operators in Figure~\ref{fig:Recovered100}.\\
\begin{figure*}
	\centering 
	\includegraphics[width = 0.8\textwidth]{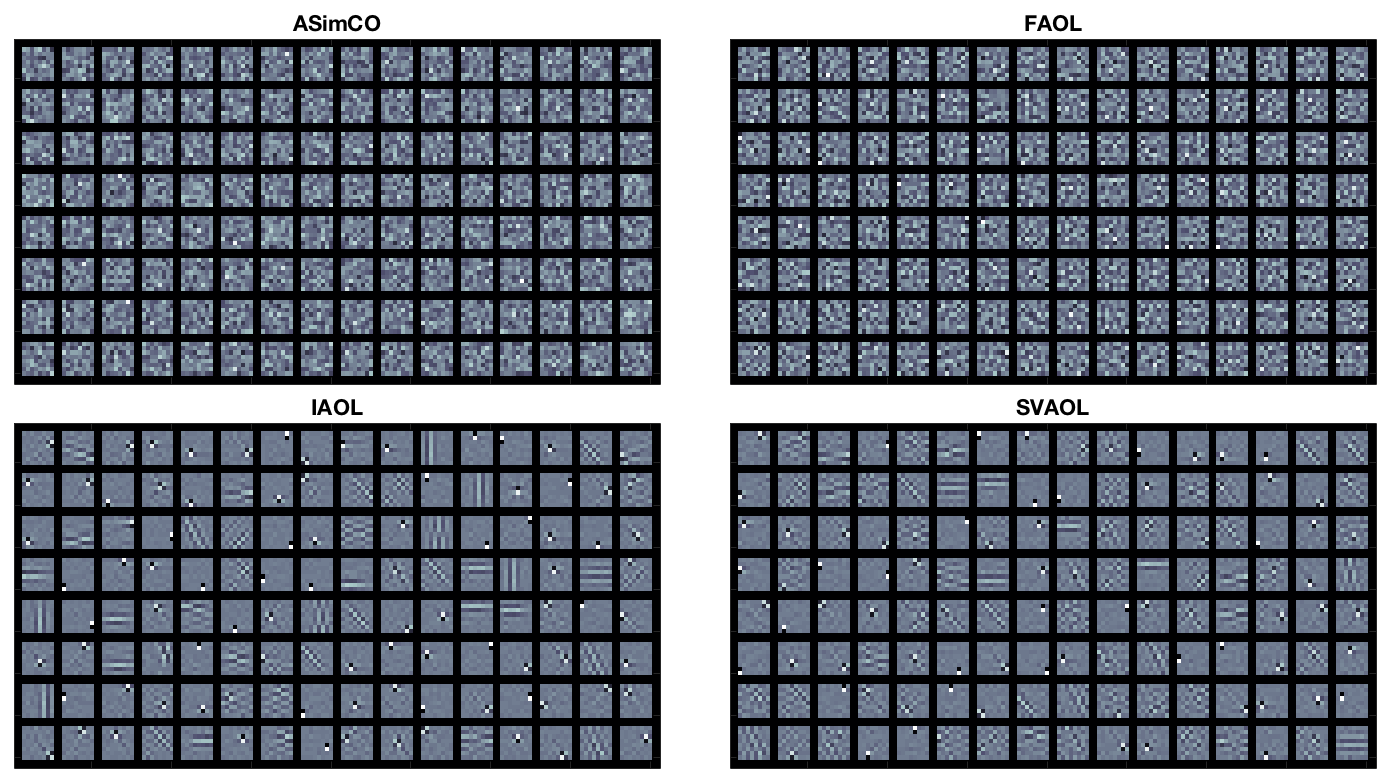}
	\caption{Analysis operators learned in 120 seconds with data drawn from the Shepp-Logan phantom contaminated with Gaussian noise, by ASimco (top left), FAOL (top right), IAOL (bottom left) and SVAOL (bottom right).}
	\label{fig:Recovered100}
\end{figure*}
Encouraged by this good performance we will in the next section apply our algorithms to image denoising.

\section{Image denoising}\label{sec:denoise}

In this section we will compare the performance of analysis operators learned by the FAOL, IAOL and SVAOL algorithms presented in this paper in combination with Tikhonov regularisation for image denoising to the performance of operators learned by (I)ASimCO. For easy comparison we use the same setup as in~\cite{dowadaplha16}, where (I)ASimCo is compared to several other major algorithms for analysis operator learning, \cite{rabr13, yanagrda13,hakldi13,rupeel13,ekba14}, and found to give the best performance. \\
\begin{figure}
	\centering 
	\includegraphics[width = 0.2\textwidth]{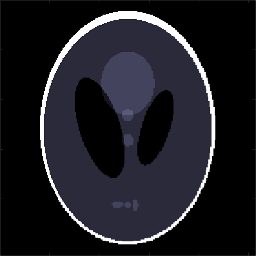}\,\,
	\includegraphics[width = 0.2\textwidth]{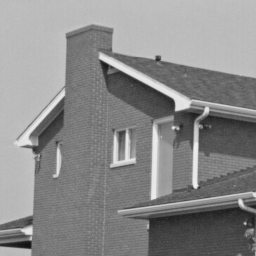}\,\,
	\includegraphics[width = 0.2\textwidth]{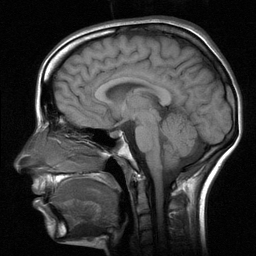}\\\vspace{6pt}
	\includegraphics[width = 0.2\textwidth]{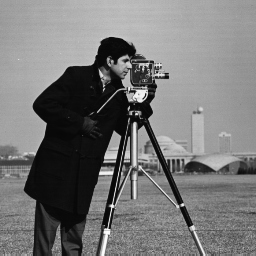}\,\,
	\includegraphics[width = 0.2\textwidth]{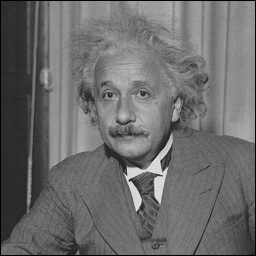}\,\,
	\includegraphics[width = 0.2\textwidth]{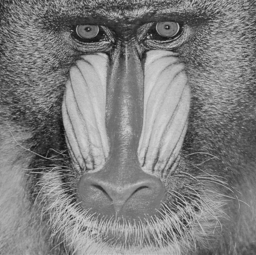}
	\caption{Images used for learning and denoising. Top: Shepp-Logan, House, MRI; Bottom: Cameraman, Einstein, Mandrill.}
	\label{fig:TrainingImages}
\end{figure}
{\bf Learning setup:} We follow the setup for the Shepp-Logan image in the last section. Our training data consists of all $8\times 8$ patches of one of the $256\times 256$ images from Figure~\ref{fig:TrainingImages} corrupted with Gaussian white noise of standard deviation $\sigma = 12.8$ and $\sigma = 45$ leading to a PSNR of approximately 25dB and 15dB, respectively. The analysis operators of size $128\times 64$ are initialised by drawing each row uniformly at random from the unit sphere, and then updated using in each step 20000 randomly selected patches of the available 62001 and a cosparsity level $\ell \in \{70,80,90,100,110,120\}$. The same initialisation is used for all algorithms. For (I)ASimCo and FAOL we use 2000 and for IAOL and SVAOL 500 iterations. We perform the optimisation without replacement for FAOL, IAOL and SVAOL. \\
{\bf Denoising setup:} For the denoising step we use a standard approach via Tikhonov regularisation based on the learned analysis operator $\Gamma$, \cite{elah06,elmiru07}. For each noisy patch $y$ we solve,
\begin{align}
\hat y = \argmin_z\, \lambda \Vert \Gamma z\Vert_1+\Vert z-y\Vert_2
\end{align}
for a regularisation parameter $\lambda\in\{0.002,0.01, 0.05,0.1,0.3,0.5\}$.  
We then reassemble the denoised patches $\hat y$ to the denoised image, by averaging each pixel in the full image over the denoised patches in which it is contained. To measure the quality of the reconstruction for each cosparsity level $\ell$ and regularisation parameter $\lambda$ we  average the PSNR of the denoised image over 5 different noise realisations and initialisations. Table~\ref{tab:ComparisonDenoising} shows the PSNR for optimal choice of $\ell$ and $\lambda$ for each of the algorithms.
\begin{table}
	\centering
	\begin{tabular}{|c|c|c|c|c|c|c|c|}
		\hline 
		&Algorithm&SL&Cam&Ein&MRI&Hou&Man\\ 
		\hline \hline
		\multirow{ 5}{*}{\rot{$\sigma =12.8$}}&ASimCO &32.57&{\bf30.36}&31.30&31.69&31.78&28.52\\ 
		\cline{2-8} 
		&IASimCO&32.49&30.32&31.19&{\bf31.77}&31.60&28.28\\ 
		\hhline{|~|=|=|=|=|=|=|=|}
		&FAOL&{\bf 33.91}&30.33&{\bf 31.62}&31.70&{\bf 32.86}&{\bf 28.71}\\ 
		\cline{2-8}  
		&IAOL&33.39&30.24&31.54&31.71&32.66&28.64\\ 
		\cline{2-8} 
		&SVAOL&33.49&30.22&31.57&31.73&32.77&28.70\\ 
		\hline \hline
		\multirow{5}{*}{\rot{$\sigma = 45$}}&ASimCO&27.65&24.16&{\bf25.87}&{\bf25.66}&27.19&23.46\\ 
		\cline{2-8}  
		&IASimCO&27.50&23.87&25.78&25.52&27.12&23.33\\ 
		\hhline{|~|=|=|=|=|=|=|=|}
		&FAOL&28.33&24.17&25.82&{\bf25.66}&{\bf 27.21}&23.50\\ 
		\cline{2-8}  
		&IAOL&{\bf28.38}&24.18&25.83&25.65&27.19&{\bf23.52}\\ 
		\cline{2-8} 
		&SVAOL&28.36&{\bf24.20}&25.84&25.62&27.18&23.51\\ 
		\hline 
	\end{tabular} 
	\caption{Performance of FAOL, IAOL, SVAOL and (I)ASimCO for denoising for different pictures and noise levels $\sigma$. }
	\label{tab:ComparisonDenoising}
\end{table}
We can see that all five algorithms provide a comparable denoising performance, mostly staying with 0.1dB of each other. However, while FAOL, IAOL, SVAOL never lag more than 0.14dB behind, they do improve upon (I)ASimCo for more than 1dB twice. The denoising results for one of these cases, that is the Shepp-Logan phantom in the low-noise regime, are shown in Figure~\ref{fig:DenoiseSL12} .

\begin{figure}
	\centering 
	\includegraphics[width = 0.3\columnwidth]{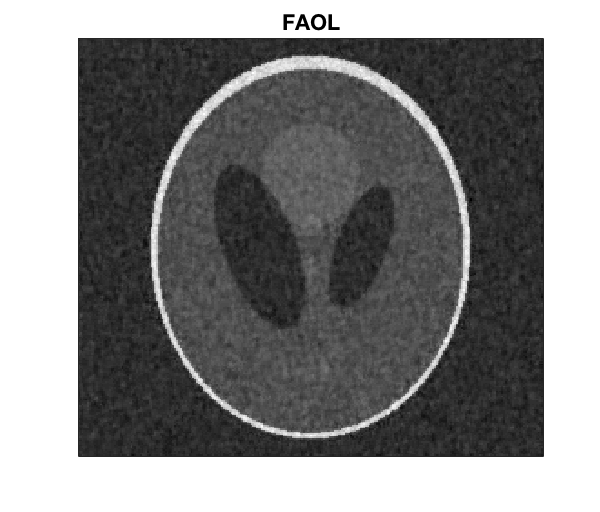}\,
	\includegraphics[width = 0.3\columnwidth]{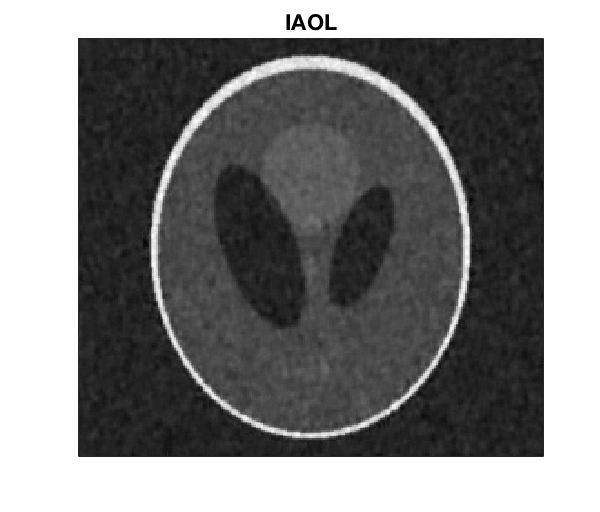}\\
	\includegraphics[width = 0.3\columnwidth]{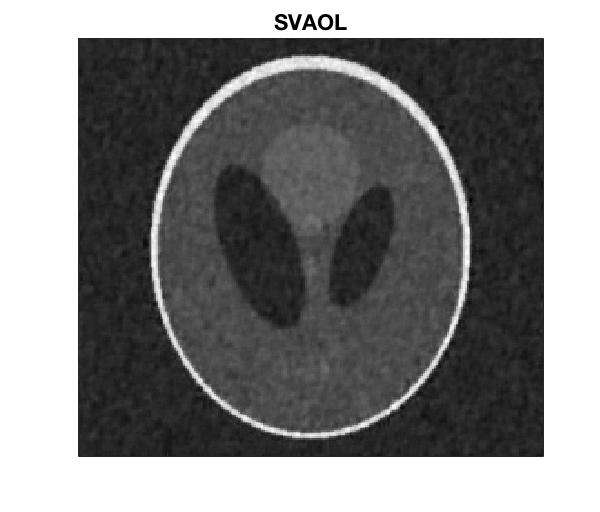}\,
	\includegraphics[width = 0.3\columnwidth]{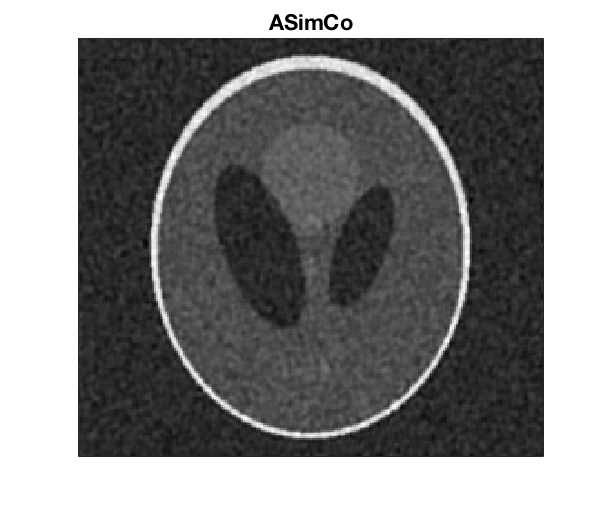}
	\caption{Denoised Shepp-Logan phantom using the optimal parameters for the various presented algorithms for noise level $\sigma=12.8$.}
	\label{fig:DenoiseSL12}
\end{figure}

After confirming that our algorithms indeed learn useful operators also on real data, we now turn to a discussion of our results.

\section{Discussion}\label{sec:disc}
We have developed four algorithms for analysis operator learning based on projected stochastic gradient-like descent, SAOL, FAOL, IAOL and SVAOL.
The algorithms perform better than the state-of-the-art algorithms (I)ASimCO, \cite{dowadaplha16}, which are similarly gradient descent based and have slightly higher but comparable computational complexity per iteration, in terms of recovery rates resp. reduction of the objective function. Another advantage of SAOL is that it is sequential with a memory requirement corresponding to the size of the operator, $\Ocal(dK)$. In contrast ASimCO either is non-sequential with a memory requirement of the order of the data matrix, $\Ocal(dN)$, or in a sequential setting needs $O(LN)$ training samples corresponding to the $L$ evaluations of the objective function necessary for the line search. IAOL and SVAOL, which are more stable than SAOL and FAOL, are sequential when accepting a memory requirement $\Ocal(d^2K)$ and in a non-sequential setting have again memory requirement $\Ocal(dN)$.\\
On synthetic data, the recovery of the target operator using the algorithms presented here is significantly faster than with (I)ASimCo. On real data the implicit algorithms IAOL and SVAOL minimise the objective function in a fraction of the time that is needed by the explicit algorithms FAOL and ASimCo.
Considering image denoising via Tikhonov regularisation as application of analysis operator learning, we see that the operators presented in this paper give similar or better results as the (I)ASimCo operators in the considered denoising setups. \\
A Matlab toolbox to reproduce all the experiments reported in this paper can be found at \url{http://homepage.uibk.ac.at/~c7021041/code/AOL.zip}.\\
While the good performance of the developed algorithms certainly justified the effort, one of our main motivations for considering a projected gradient descent approach to analysis operator learning was to derive convergence results similar to those for dictionary learning, \cite{sc15}.
However, even a local convergence analysis, turns out to be quite different and much more complicated than for dictionary learning. The main reason for this is that sparsity is more robust to perturbations than cosparsity. So for an $S$-sparse signal $y=\dico_I x_I$ and a perturbed dictionary $\pdico$ with $\|\patom_k-\atom_k\|_2<\eps$ for balanced $x_I$ the best $S$-term approximation in $\pdico$ will still use the same support $I$. In contrast, if $y$ is $\ell$-cosparse with respect to an analysis operator $\Omega$, $\Omega_\Lambda y =0$, then for a perturbed operator $\Gamma$ with $\|\gamma_k-\omega_k\|_2<\eps$ the smallest $\ell$ entries of $\Gamma y$ will not all be located in $\Lambda$. In order to get a local convergence result one has to deal with the fact that only part of the cosupport is preserved. We expect that for most signals containing $k$ in the cosupport with respect to $\Omega$, $k$ will also be in the cosupport with respect to $\Gamma$. Unfortunately the mathematical tools necessary to quantify these statements are much more involved than the comparatively simple results necessary for the convergence of dictionary learning and so the local convergence analysis remains on our agenda for future research. It is also possible that the analysis can be carried out from a dynamical systems perspective using the differential equations described in the beginning of Section~\ref{sec:backward}.\\
Another research direction we are currently pursuing is inspired by the shape of the analysis operators learned on noiseless images. The translation invariance of the edge detector like analysers suggests to directly assume translation invariance of the analysis operator. Such an operator has two advantages, first, learning it will require less training samples and second, since it can be reduced to several translated mother functions, it will be cost efficient to store and apply.

\section*{Acknowledgements}
This work was supported by the Austrian Science Fund (FWF) under Grant no.~Y760.
In addition the computational results presented have been achieved (in part) using the HPC infrastructure LEO of the University of Innsbruck.
Part of this work has been carried out while M.S. was supported by the trimester program 'Mathematics of Signal Processing' at the Hausdorff Research Institute for Mathematics.
The authors would also like to thank Marie Pali for proof-reading the manuscript and her valuable comments.

\appendix

\section{Computation of the optimal stepsize} \label{sec:CompAlpha}

To find the optimal stepsize, we first compute the derivative of \[F(\alpha) = \frac{(\gamma_k -\alpha g_k)A_k(\gamma_k - \alpha g_k)^\star}{\|(\gamma_k -\alpha g_k)\|^2},\]which is given by \[F'(\alpha) = -2\frac{(b-a^2)+\alpha(ab - c) + \alpha^2(ac-b^2)}{(1-2\alpha a +\alpha^2 b^2)^2},\] where we used the notation $a=\gamma_k A_k\gamma_k^\star$, $b=\gamma_k A_k^2\gamma_k^\star$ and $c=\gamma_k A_k^3\gamma_k^\star$.

First, suppose that $b^2 \neq ac$. Setting the first derivative equal to zero and solving a quadratic equation gives the results \begin{equation}\label{eq:alphaPM}\alpha_{\pm}  = \frac{ab-c\pm\sqrt{(c-ab)^2 - 4(b^2-ac)(a^2-b)}}{2(b^2-ac)}.\end{equation}

The discriminant is always larger or equal than zero, as \[(c-ab)^2 - 4(b^2-ac)(a^2-b) = (c+2a^3-3ab)^2 - 4(a^2-b)^3\] and $a^2 -b = (\gamma_k A_k \gamma_k^\star)^2 - \gamma_k A_k^2 \gamma_k^\star = \gamma_k A_k(\gamma_k^\star \gamma_k -\id)A_k \gamma_k^\star \leq0$, because the matrix $\gamma_k^\star\gamma_k -\id$ is negative semidefinite. 

We can verify that $\alpha_+$ indeed minimizes F, by substituting it into the second derivative
\ifthenelse{\boolean{onecol}}{
\begin{align*}F''(\alpha) = -2\tfrac{(ab-c + 2\alpha(ac-b^2))(1-2\alpha a +\alpha^2 b^2)^2-4(1-2\alpha a + \alpha^2 b^2)(\alpha b - a)(b-a^2+\alpha(ab-c)+\alpha^2(ac-b^2))}{(1-2\alpha a+\alpha^2 b^2)^4}.\end{align*}}
{\begin{align*}F''(\alpha) = -2\tfrac{(ab-c + 2\alpha(ac-b^2))(1-2\alpha a +\alpha^2 b^2)^2}{(1-2\alpha a+\alpha^2 b^2)^4}+\\
	+\tfrac{8(1-2\alpha a + \alpha^2 b^2)(\alpha b - a)(b-a^2+\alpha(ab-c)+\alpha^2(ac-b^2))}{(1-2\alpha a+\alpha^2 b^2)^4}.
	\end{align*}}
We see that
\ifthenelse{\boolean{onecol}}{
\[F''(\alpha_+) = -2\tfrac{(ab-c + 2\alpha_+(ac-b^2))(1-2\alpha_+ a +\alpha_+^2 b^2)^2}{(1-2\alpha_+ a+\alpha_+^2 b^2)^4}  = -2\tfrac{ab-c + 2\alpha_+(ac-b^2)}{(1-2\alpha_+ a+\alpha_+^2 b^2)^2}.\]}
{\begin{align*}
	F''(\alpha_+) &=& -2\tfrac{(ab-c + 2\alpha_+(ac-b^2))(1-2\alpha_+ a +\alpha_+^2 b^2)^2}{(1-2\alpha_+ a+\alpha_+^2 b^2)^4} =\\ &=& -2\tfrac{ab-c + 2\alpha_+(ac-b^2)}{(1-2\alpha_+ a+\alpha_+^2 b^2)^2}.
	\end{align*}}
The denominator of the fraction above is positive, so we need to show that the numerator is negative. Inserting the expression for $\alpha_+$ from Equation~\eqref{eq:alphaPM} into the numerator yields
\begin{align*}
ab-c - 2(b^2-ac)\tfrac{ab-c+\sqrt{(c-ab)^2 - 4(b^2-ac)(a^2-b)}}{2(b^2-ac)}=\\= -\sqrt{(c-ab)^2 - 4(b^2-ac)(a^2-b)}\leq0,
\end{align*}
so $F''(\alpha_+)\geq0$ and $\alpha_+$ is indeed the desired minimum. 

If $F''(\alpha_+)$ is zero, then $\alpha_+$ need not be a minimum. However, this is only the case if \[\sqrt{(c-ab)^2 - 4(b^2-ac)(a^2-b)} = 0.\] The computation showing the positivity of the discriminant suggests that in this case $(c-ab)^2 - 4(b^2-ac)(a^2-b) =  (c+2a^3-3ab)^2 + 4(b -a^2)^3 = 0$. This is a sum of two nonnegative numbers, so both numbers must be zero. However, $b-a^2$ has been shown to vanish only if $\gamma_k$ is an eigenvector of $A_k$. In this case also $c+2a^3 - 3ab = 0$, which shows that for $b^2\neq ac$, we have that $F''(\alpha_+)>0$, unless $\gamma_k$ is an eigenvector of $A_k$.

Now suppose that $b^2 = ac$. If $b = 0$, it follows that $g_k = 0$ and hence $F(\alpha)$ is zero everywhere, so regardless of the choice of $\alpha$, we cannot further decrease the objective function. In this case we choose $\alpha_+ = 0$. 
If $b\neq 0$, we have \[F'(\alpha) = -2 \frac{b- a^2 +\alpha(ab-c)}{(1-2\alpha a + \alpha^2 b^2)^2},\]
which vanishes for $\alpha_+ = \frac ab$.
The second derivative in $\alpha_+$  is given by \[F''(\alpha_+) = -2\frac{ab-c}{(1-2\tfrac{a^2}{b} +a^2)^2}.\]
Again, the denominator is positive and the numerator can be shown to be negative using a similar symmetrisation argument as in the proof of Theorem~\ref{thm:IAOLdecrease}. This argument also shows that $F''(\alpha_+) = 0$ if and only if $\gamma_k$ is an eigenvector of $A_k$.

\bibliography{karinbibtex}
\bibliographystyle{plain}

\end{document}